\documentclass{article}

\PassOptionsToPackage{authoryear}{natbib}


\usepackage[final]{neurips_2024}


\bibliographystyle{abbrvnat}
\setcitestyle{authoryear,open={(},close={)}}


\usepackage[utf8]{inputenc} 
\usepackage[T1]{fontenc}    
\usepackage{hyperref}       
\usepackage{url}            
\usepackage{booktabs}       
\usepackage{amsfonts}       
\usepackage{nicefrac}       
\usepackage{microtype}      
\usepackage[dvipsnames]{xcolor}         

\usepackage{amsmath}
\usepackage{amssymb}
\usepackage{mathtools}
\usepackage{amsthm}

\renewcommand{\div}{\mathrm{Diversity}}
\renewcommand{\P}{\mathrm{P}}
\newcommand{\R}{\mathbb{R}}
\DeclareMathOperator*{\argmax}{arg\,max}

\usepackage{array}
\newcolumntype{H}{>{\setbox0=\hbox\bgroup}c<{\egroup}@{}}

\usepackage{wrapfig}

\usepackage{comment}

\theoremstyle{plain}
\newtheorem{theorem}{Theorem}[section]
\newtheorem{proposition}[theorem]{Proposition}

\theoremstyle{definition}
\newtheorem{definition}[theorem]{Definition}

\theoremstyle{remark}

\usepackage{wrapfig}
\usepackage{caption, subcaption}

\usepackage{todonotes}

\title{Challenges of Generating Structurally Diverse Graphs}

%

\author{%
  Fedor Velikonivtsev \\
  HSE University, Yandex Research\\
   \texttt{fvelikon@yandex-team.ru} \\
  \And
  Mikhail Mironov \\
  Yandex Research \\
  \texttt{mironov.m.k@gmail.com} \\
  \AND
  Liudmila Prokhorenkova \\
  Yandex Research \\
  \texttt{ostroumova-la@yandex-team.ru} \\
}

\begin{document}

\maketitle

\begin{abstract}
For many graph-related problems, it can be essential to have a set of structurally diverse graphs. For instance, such graphs can be used for testing graph algorithms or their neural approximations. However, to the best of our knowledge, the problem of generating structurally diverse graphs has not been explored in the literature. In this paper, we fill this gap. First, we discuss how to define diversity for a set of graphs, why this task is non-trivial, and how one can choose a proper diversity measure. Then, for a given diversity measure, we propose and compare several algorithms optimizing it: we consider approaches based on standard random graph models, local graph optimization, genetic algorithms, and neural generative models. We show that it is possible to significantly improve diversity over basic random graph generators. Additionally, our analysis of generated graphs allows us to better understand the properties of graph distances: depending on which diversity measure is used for optimization, the obtained graphs may possess very different structural properties which gives a better understanding of the graph distance underlying the diversity measure.
\end{abstract}

\section{Introduction}

\begin{figure}[h!]
  \vspace{-5pt}
  \centering
  \includegraphics[trim={2cm 2cm 2cm 2.5cm},clip,width=\linewidth]{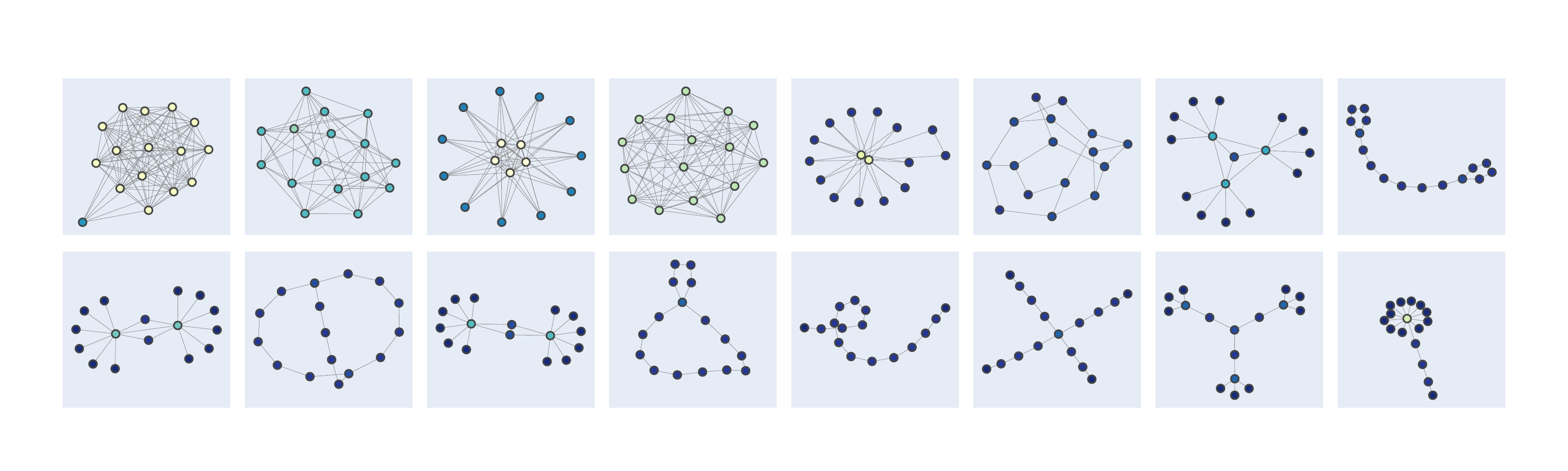} 
  \caption{A sample of generated graphs
  }
  \label{fig::shiny_graphs_sample_portrait}
\end{figure}
\vspace{-5pt}
Many real-world objects can be naturally represented as graphs: biological and chemical entities (atoms, molecules, proteins, metabolic maps), interaction networks (social and citation networks, financial transactions), road maps, epidemic spreads, and so on. That is why the analysis of graph-structured data is an important and rapidly developing research area.

To generate realistic graph structures, many random graph models have been proposed in the literature~\citep{boccaletti2006complex}. Such models aim to imitate properties typically observed in natural structures: power-law degree distribution, small diameter, community structure, and others. Each random graph model captures some of these properties; thus, the generated graphs are inevitably similar in certain aspects.

On the other hand, for some applications, it is important to be able to generate a set of graphs that are \emph{structurally diverse}. For instance, if one needs to automatically verify the correctness of a graph algorithm, estimate how well a heuristic algorithm approximates the true solution for a graph problem, or evaluate neural approximations of graph algorithms~\citep{velivckovic2021neural}. In all these cases, algorithms and models should be tested on as diverse graph instances as possible since otherwise the results can be biased towards particular properties of the test set~\citep{georgiev2023beyond}. In other words, we need representative graphs that `cover' (in some sense) the space of all graphs. Datasets consisting of diverse graphs can also be useful for evaluating graph neural networks and their expressive power. In this direction, \citet{palowitch2022graphworld} propose the GraphWorld benchmark that consists of graphs with various statistical properties. An important part of the benchmark is the relation between graph structure and node labels: graphs with different homophily levels can be constructed. However, generated graph structures are limited to the degree-corrected stochastic block model~\citep{karrer2011stochastic} and thus do not cover all complex connection patterns that graphs may have.

To the best of our knowledge, the problem of generating a dataset where graphs are maximally diverse has not been addressed in the literature yet. In this paper, we fill this gap. For this purpose, we first need to define diversity of a set of graphs which is already a challenging task. To measure diversity, we define dissimilarity (distance) for a pair of graphs and then aggregate the pairwise graph distances into the overall diversity measure. In this regard, we show that popular methods of measuring diversity have significant drawbacks and suggest using Energy since it satisfies certain important desirable properties.

After we have defined a performance measure for our problem, several approaches can be used to optimize it. We develop and analyze the following strategies: a greedy method based on diverse random graph generators, a local graph optimization approach, an adaptation of the genetic algorithm to our problem, and a method based on neural generative modeling. For the simplest greedy algorithm, we provide theoretical guarantees on the diversity of the obtained set of graphs relative to the maximal achievable diversity (for a given pre-generated set of graphs to choose from).

We empirically investigate the proposed strategies and show that it is possible to significantly improve the diversity of the generated graphs compared to basic random graph models. In addition to the numerical investigation of diversity measures, we also analyze the distribution of graph structural characteristics and relations between them. Here we also observe significantly improved diversity. Moreover, since we consider diversity measures based on several graph distances, our results shed light on the properties of these graph distances. Indeed, depending on the function we optimize, the structural properties of the generated graphs can vary since graph distances focus on different aspects of graph dissimilarity. Thus, by inspecting the properties of generated graphs, one can better understand what graph characteristics a particular graph distance is sensitive to. 

In summary, this work formulates and investigates the problem of generating structurally diverse graphs that can serve as representative instances for various graph-related tasks. Still, many challenges remain to be solved and we hope that our work will encourage further theoretical and practical research in this field.

\section{Defining diversity for a set of graphs}\label{sec:diversity}

\subsection{Problem setup}\label{sec:setup}

As discussed above, for various graph-related problems, it can be essential to have representative graphs that are structurally diverse. Intuitively, such graphs are expected to cover (in some sense) the space of all graphs.\footnote{In this work, we use the terms `diversity' and `coverage' interchangeably.} This section discusses how to define diversity and why it is non-trivial. 

Let us start with a motivating example. The most basic random graph generator is the Erd{\H{o}}s-R{\'e}nyi model. In this model, an edge between any two nodes is added with probability $p$ independently of other edges. If $n$ is fixed and $p=0.5$, then every simple graph on $n$ nodes can be generated equally likely (assuming that the nodes are enumerated), and thus one may think that this model generates representative graphs.

\begin{wrapfigure}[16]{r}{0.55\textwidth}
  \vspace{-10pt}
  \centering
  \includegraphics[width=0.9\linewidth]{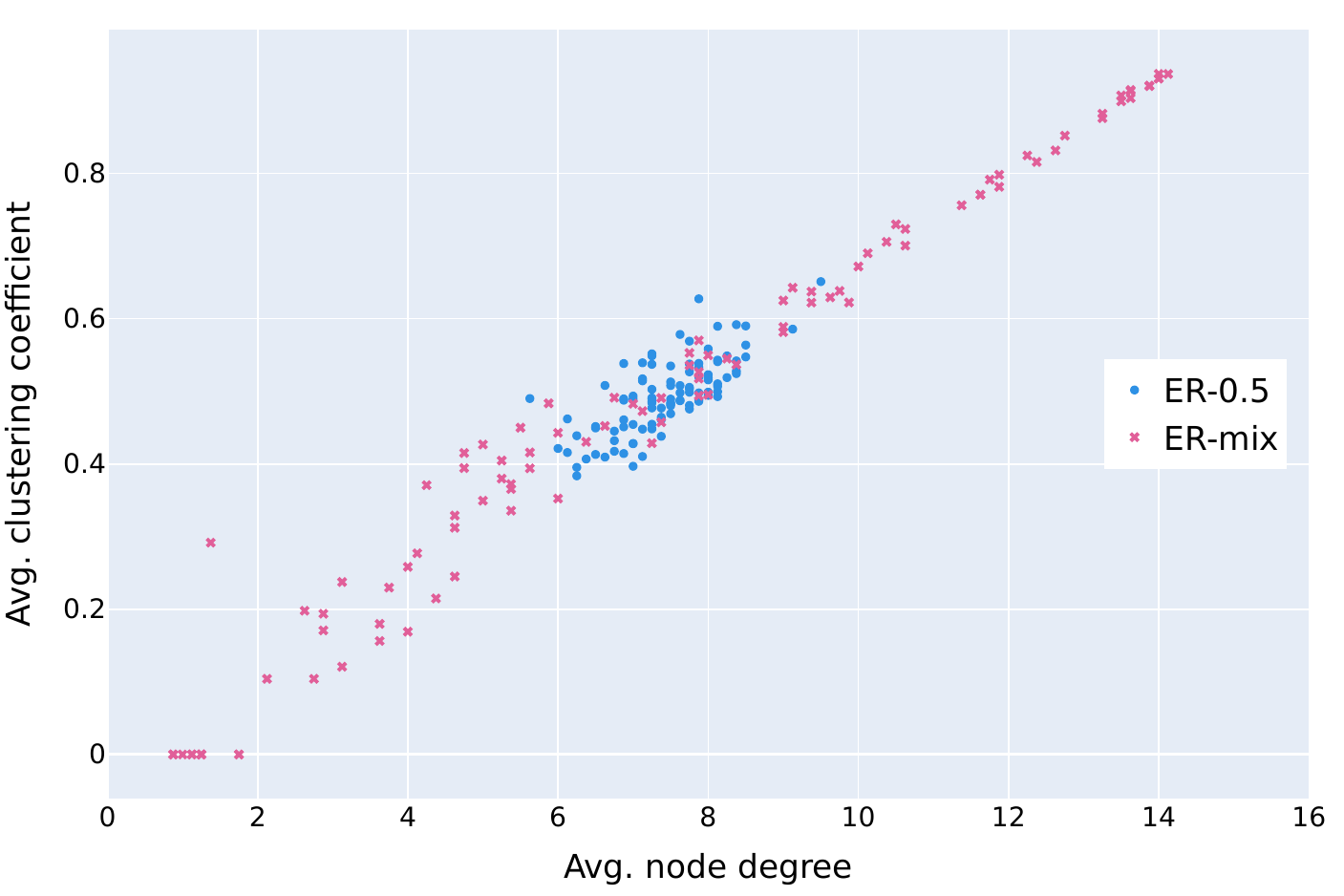} 
  \captionof{figure}{Average node degree and average clustering coefficient in the Erd{\H{o}}s-R{\'e}nyi model with $n=16$ and $p=0.5$ (ER-0.5) or varying $p$ (ER-mix)}
  \label{fig::er_05_mix_avg_nodedeg_vs_clustering_label_true}
\end{wrapfigure}
However, it is known that with high probability graphs generated according to the Erd{\H{o}}s-R{\'e}nyi model have typical properties and thus are all very similar to each other with high probability~\citep{erdos1960evolution}. For instance, as illustrated in Figure~\ref{fig::er_05_mix_avg_nodedeg_vs_clustering_label_true}, for the Erd{\H{o}}s-R{\'e}nyi model with $p=0.5$ (ER-0.5), the average node degree and average clustering coefficient are concentrated near $(n-1)/2$ and 0.5, respectively. Varying $p$ (ER-mix) allows one to get all possible values of these characteristics, but they are linearly dependent in expectation. Thus, the space of all possible combinations of characteristics cannot be covered by the Erd{\H{o}}s-R{\'e}nyi model.

Intuitively, by \emph{diverse graphs} we mean those having different structural properties such as degree distribution, pairwise distances, subgraph counts, and so on. However, this intuition is hard to formalize as one may potentially come up with infinitely many properties. On the other hand, defining graph dissimilarity is closely related to \emph{graph distances}. Graph distances have been studied for a long time, and many variants capturing different graph properties exist in the literature~\citep{compare_distances_review}. We review some of them in Section~\ref{sec:graph_distances}.

Now, assume that we have a multiset of $N$ graphs $S = G_1, \ldots, G_N$. Throughout the paper, we consider undirected graphs without self-loops and multiple edges. Assume that we are also given a distance measure $D(G, G')$ that evaluates dissimilarity between two graphs. Then, we define diversity as:
\begin{equation}\label{eq:diversity_general}
\div(S) = F(\{D(G, G'): G, G' \in S \})\,,
\end{equation}
where $F$ is some function that computes diversity given a set of pairwise distances. The choice of $F$ is also non-trivial, and we discuss possible approaches in Section~\ref{sec:set_diversity}.

After we have defined the distance $D(\cdot,\cdot)$ and the measure of diversity, our primary goal is to find a multiset of graphs $\bar S$ of size $N$ to maximize its diversity:
\begin{equation}\label{eq:objective}
   \bar S = \argmax \limits_{G_1, G_2, \dots, G_N \in \mathcal{G}_n} \div(\{G_1, G_2, \dots, G_N\}),
\end{equation}
where $\mathcal{G}_n$ is the set of all graphs with $n$ nodes.\footnote{For simplicity, we assume that the number of nodes $n$ is the same for all graphs. Note that $n$ can naturally be considered as an upper bound on the number of nodes: smaller graphs can be obtained if some of the nodes have zero degrees.}

\subsection{Graph distances}\label{sec:graph_distances}

This section discusses how one can define distance between two graphs. As mentioned above, this task is highly non-trivial, and the literature on this topic is abundant~\citep{compare_distances_review,ensemble_test}.

Some graph distance measures are based on the optimal node matching between two graphs: first, the correspondence between nodes is found, and then some distance between two adjacency matrices can be computed (a popular example of this type is \emph{graph edit distance}). However, this approach can only be applied to graphs having the same number of nodes (to achieve this in general case, one may add zero-degree nodes to the smaller graph). Also, finding the optimal matching between nodes is usually computationally expensive. 

Another class of distance measures is based on computing some descriptor (vector representation) for a graph and then measuring the distance between two graph representations. Such measures usually violate the positivity axiom of a metric space since the distance between two different graphs can be equal to zero. Indeed, if we guarantee that $D(G, G') = 0$ if and only if $G$ and $G'$ are isomorphic, then computing the distance is at least as hard as graph isomorphism testing, which is infeasible for most applications. Thus, when computing a graph representation, we inevitably lose some information about the graph. Various approaches for creating graph descriptors exist in the literature. Some of them use the spectrum of the graph Laplacian (or its normalized variant) that is known to encode some important structural information~\citep{Ipsen_2002,wilson2005pattern,netLSD}. Other approaches are based on local statistics, such as graphlets~\citep{GCD}. Each graph distance captures some properties of a graph and can be insensitive to others. We refer to comparative studies of graph distances~\citep{compare_distances_review,ensemble_test} for a more comprehensive list of known measures and the analysis of their properties. In more recent work, \citet{thompson2022evaluation} suggested graph representations based on an untrained random GNN. Such representations can also be used for computing graph distances.

Our paper does not aim to answer which graph distance is better. Each distance captures particular graph properties and the resulting set of generated diverse graphs may significantly depend on this choice. In our experiments, we consider several representative options.  As a result, our analysis of generated graphs gives some additional insights into the properties of graph distances.

\subsection{Measuring diversity for a set of elements}\label{sec:set_diversity}

In this section, we discuss the problem of measuring diversity for a set of elements given their pairwise distances (or similarity values). This problem was addressed in several recent papers~\citep{xie2023much,friedman2023vendi} discussed in more detail in Appendix~\ref{app:related-diversity}. However, as we show, none of the proposed approaches are fully suitable for our task.

Probably the most natural and widely-used measure for quantifying diversity is the \emph{average pairwise distance} between the elements. However, we note that this measure is not suitable in our case since optimizing it may lead to degenerate configurations. For instance, consider a toy experiment with dots distributed on a line segment. As shown in Figure~\ref{fig:1D_average}, optimizing the average pairwise distance forces many points to collapse into one (at the endpoints of the line segment), which is clearly not a desirable behavior. This happens since Average does not take into account whether the elements of a dataset are unique or well isolated from each other.

Another possible measure that does take uniqueness of elements into account is the \emph{minimum pairwise distance} (often referred to as \emph{Bottleneck} in the literature). However, this measure is not sensitive to all the distances but the minimal one and thus cannot distinguish vastly different configurations.

Motivated by these examples, we formulate two properties that a good diversity measure is expected to satisfy. We assume that we are given a multiset $S$ of $N$ elements and for each pair of elements $G, G' \in S$ we know the distance $D(G, G')$ between them.
Note that we are interested in maximizing diversity for a \emph{fixed} number of elements $N$, which simplifies the requirements since we do not have to deal with how diversity changes when the number of elements increases.

\paragraph{Monotonicity} Suppose we are given two multisets $S, S'$ both consisting of $N$ different elements and a bijection $g: S \rightarrow S'$ between them. Assume that for any $G_i, G_j \in S$ we have 
\begin{equation}
D(G_i,G_j) \le D(g(G_i), g(G_j))
\end{equation}
with strict inequality for at least one pair $i,j$. Then we require $\div(S)<\div(S')$.

This property describes the essence of diversity measures: larger pairwise distances should lead to higher diversity values. Thus, a good measure of diversity should be monotone. 

\paragraph{Uniqueness} Suppose $S$ consists of $N$ different elements $G_1, \dots, G_N$. Denote by $S_{ij}$ the multiset obtained from $S$ by removing $G_i$ and adding the second copy of $G_j$ for $j \not= i$, that is $S_{ij} := (S \backslash \{G_i\}) \cup \{G_j\}$ Then, we require $\div(S)>\div(S_{ij})$.

In other words, this property requires that given $N-1$ elements, adding a unique element results in a higher diversity than duplicating an already existing element. This property is very intuitive since to increase the coverage it is clearly better to add a new element than to duplicate an existing one. 

Note that the average pairwise distance does not have the uniqueness property, thus optimizing it may lead to degenerate solutions. Clearly, the minimum pairwise distance does not have monotonicity. Moreover, it turns out that none of the measures from~\citep{xie2023much,friedman2023vendi} has both these properties, see Appendix~\ref{app:properties-existing} for the details.

\begin{figure}[t]
    \centering
    \begin{subfigure}{0.4\textwidth}
       \includegraphics[width=\textwidth]{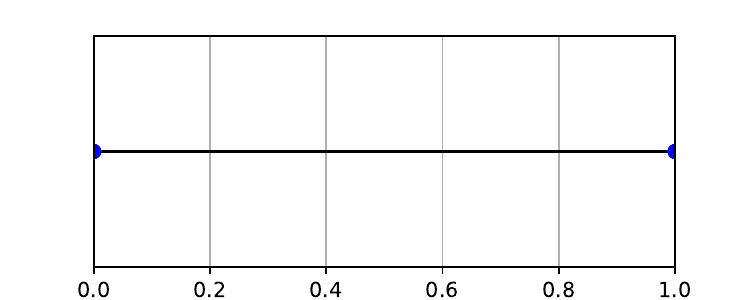}
       \caption{Average}
       \label{fig:1D_average}
    \end{subfigure}
    \begin{subfigure}{0.4\textwidth}
       \includegraphics[width=\textwidth]{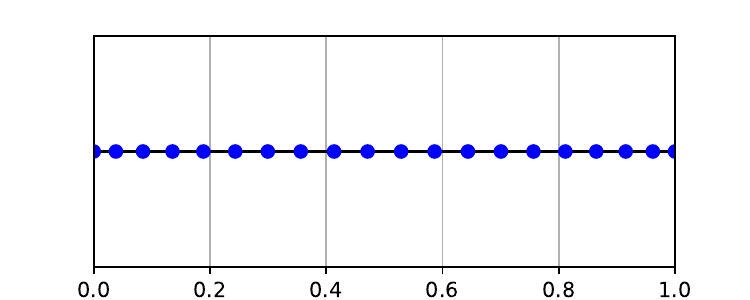}
       \caption{Energy}
       \label{fig:1D_energy}
    \end{subfigure}
    \caption{Optimized Average or Energy on a line segment}
    \label{fig:enter-label}
    \vspace{-5pt}
\end{figure}

Thus, we propose an alternative diversity measure motivated by the \emph{energy of a system of equally charged particles}. Namely, given a constant $\gamma>0$, we define the Energy of a set of graphs $S$ as 
\begin{equation}\label{eq:energy}
-\frac{1}{N(N-1)} \sum\limits_{i \neq j}  \frac{1}{D(G_i, G_j)^\gamma}.
\end{equation}
The parameter $\gamma$ affects how strongly we penalize small pairwise distances. In an extreme case of $\gamma \to \infty$, this measure becomes equivalent to Bottleneck. All our theoretical results hold for any $\gamma>0$.\footnote{We illustrate the effect of $\gamma$ on the obtained diverse configurations in Appendix~\ref{app:gamma}.} For our experiments, we use $\gamma=1$, so~\eqref{eq:energy} can be naturally interpreted as the average pairwise energy for a system of equally charged particles (we multiply by -1 to get a measure that is larger for more diverse sets of graphs).

Our toy example in Figure~\ref{fig:1D_energy} shows that when being optimized Energy leads to a diverse configuration, in contrast to Average. Regarding our formal properties, monotonicity is obviously satisfied by Energy~\eqref{eq:energy}. To show uniqueness, we note that any multiset with pairwise different elements has some finite negative diversity and any multiset with two copies of one element has diversity $-\infty$. 

\vspace{3pt}

\begin{proposition}
    Energy~\eqref{eq:energy} satisfies both monotonicity and uniqueness.
\end{proposition}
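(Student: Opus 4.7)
The plan is to check the two properties separately, and both arguments essentially reduce to monotonicity of the real map $x \mapsto 1/x^\gamma$ on $(0,\infty)$ for $\gamma>0$.

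\textbf{Monotonicity.} First I would fix $S,S'$ and the bijection $g$, and rewrite the Energy of $S'$ by pulling the sum back along $g$, so that the two energies become sums indexed by the same set of ordered pairs $(i,j)$ with $i\ne j$. Since $\gamma>0$, the function $\varphi(x)=x^{-\gamma}$ is strictly decreasing on $(0,\infty)$. Hence for every pair where $D(G_i,G_j) \le D(g(G_i),g(G_j))$ we have $\varphi(D(G_i,G_j)) \ge \varphi(D(g(G_i),g(G_j)))$, with strict inequality on at least one pair by assumption. Summing and multiplying by $-1/(N(N-1))$ reverses the inequality, giving $\div(S) < \div(S')$. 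The only subtlety is that we need $D(G_i,G_j)>0$ for all pairs so that $\varphi$ is defined and finite; this holds because the elements of $S$ and of $S'$ are pairwise distinct, so the distances on both sides are strictly positive under the standing assumption that $D$ separates the graphs we are considering.

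\textbf{Uniqueness.} For this part I would simply compare the two energies directly. Since the $N$ elements of $S$ are distinct, every pairwise distance $D(G_i,G_j)$ with $i\ne j$ is strictly positive, so each summand $1/D(G_i,G_j)^\gamma$ is a finite positive real, and therefore $\div(S)$ is a finite (negative) number. In contrast, $S_{ij}$ contains two copies of $G_j$, so at least one ordered pair in the double sum contributes $1/D(G_j,G_j)^\gamma = 1/0^\gamma = +\infty$. Hence the total sum is $+\infty$ and $\div(S_{ij}) = -\infty < \div(S)$.

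The only real obstacle I anticipate is making the convention about $D(G,G)=0$ and the divergence $1/0^\gamma = +\infty$ precise: one can either interpret Energy as a value in $[-\infty,0)$ so that $-\infty$ is a legitimate value, or restrict the claim to multisets on which all pairwise distances between distinct elements are positive (which is the implicit assumption already used throughout Section~\ref{sec:set_diversity}). Once this convention is stated, both arguments above are one-line consequences of the strict monotonicity of $x\mapsto x^{-\gamma}$, so no further estimates are needed.
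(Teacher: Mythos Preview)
Your proposal is correct and follows exactly the paper's approach: the paper states that monotonicity is ``obviously satisfied'' (which your termwise comparison via the strict decrease of $x\mapsto x^{-\gamma}$ makes explicit), and for uniqueness it gives precisely your argument that a multiset of distinct elements has finite negative Energy while any multiset containing a duplicate has Energy $-\infty$. Your remark about the convention $1/0^\gamma=+\infty$ is apt but not a gap; the paper implicitly adopts the same convention.
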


\vspace{-3pt}

Despite having these desirable properties, Energy still has a shortcoming: it can be unboundedly large when two elements become too close to each other. However, there are currently no better alternatives, as shown in a recent paper by~\citet{mironov2024measuring} that extends the analysis of diversity measures in terms of the desirable properties they satisfy. We refer to Appendix~\ref{app:energy-problem} for a discussion. In our experiments, we use Energy (combined with several graph distances) as our primary measure of diversity and also consider Average as an additional measure.

\section{Algorithms for diversity optimization}\label{sec:algorithms}

In the previous section, we discussed how to measure diversity and why this task is non-trivial. In this section, we propose several approaches for diversity optimization. Our goal is to investigate diverse algorithms: from a basic approach based on random graph generators to a more advanced one based on neural generative modeling. 

Our algorithms can be applied to arbitrary diversity measures. However, for scalability purposes, we restrict ourselves to measures that can be written in the following way. Suppose we are given a set of size $N$ and any element $G$ from this set. Denote the subset of all elements excluding $G$ by $S = \{ G_1, G_2, \ldots, G_{N-1}\}$. Then, the diversity of the original set can be written as:
\begin{equation}\label{eq:fitness}
\div(\{G\} \cup S) = g(f(G,S), c(S)),
\end{equation}
where $g$ is a function that is monotone w.r.t.~both arguments, $f(G,S)$ depends only on the distances $\{D(G,G_i): G_i \in S\}$, and $c(S)$ is a value that depends only on $S$ (and does not depend on $G$). We call such function $f(G,S)$ a \emph{fitness} of a graph $G$ w.r.t.~a set of graphs $S$. The measures considered in this study satisfy~\eqref{eq:fitness}. For instance, for Energy, $g$ can be the sum, $f(G,S) = \frac{1}{N(N-1)/2}\sum \limits_{G_i \in S}\frac{1}{D(G,G_i)^\gamma}$, and $c(S) = \frac{1}{N(N-1)/2}\sum \limits_{i<j}\frac{1}{D(G_i,G_j)^\gamma}$. For Bottleneck, $g$ can be the minimum, $f(G,S) = \min \limits_{G_i \in S} D(G,G_i)$, and $c(S) = \min \limits_{i<j} D(G_i,G_j)$. Note that computing the fitness $f(G,S)$ requires $N-1$ distance computations.

It is important to note that standard machine learning approaches cannot be directly applied to our task: usually, generative algorithms require a training set that they try to imitate. In our case, there is no training set since the aim is to generate graphs that are maximally dissimilar to each other.

\subsection{Greedy algorithm}

The main idea of this algorithm is to build a set of diverse graphs iteratively by adding at each step the most suitable graph from a predefined set $\hat{S}$ of a much larger size. This set can be either user input, the result of another algorithm, or a set of graphs generated by random graph models. The process initiates by randomly choosing a graph from $\hat{S}$. At each step, the most suitable graph from $\hat{S}$ is chosen according to the fitness $f(G, S)$, where $S$ is the currently selected set of graphs.

A detailed description of the algorithm is given in Appendix~\ref{sec:app_greedy}. We also provide the analysis of computational complexity and a lower bound on the diversity of graphs returned by the greedy algorithm relative to the diversity of the initial set $\hat{S}$ (see Theorem~\ref{thm::greedy_bounds}).

\subsection{Genetic algorithm}

The genetic algorithm enhances the diversity of a graph population through evolutionary operations. Starting with an initial set of $N$ graphs, it iteratively refines this set by selecting pairs of graphs as parents and generating a child through crossover and mutation processes. This child can replace the less-fit graph in the population if it increases the overall diversity; otherwise, the algorithm tries to find a more suitable offspring by repeating the process. To prevent itself from getting stuck in local optima, the algorithm can accept a candidate that decreases the overall diversity if the number of unsuccessful attempts exceeds a certain threshold. The algorithm iterates for a predefined number of iterations, ultimately evolving the population towards greater diversity. This approach adapts principles from genetics to solve optimization problems, as we try to preserve beneficial graph characteristics while at the same time introducing novel configurations to achieve a diverse set of graphs.

The details of this algorithm are given in Section~\ref{sec:app_genetic}, where we also analyze the complexity of the algorithm.

\subsection{Local optimization algorithm}

The main idea of the local optimization algorithm is the refinement of the diversity of a graph population by iteratively modifying individual graphs. Starting from an initial set, we randomly sample graphs and make small modifications to their structure (single edge addition/deletion). Then, if the overall diversity improves, we accept the change. As in the other algorithms, we can accept less fit modifications after consecutive failed attempts to prevent stagnation at a local optimum.

The details of this algorithm are given in Section~\ref{app:local-opt}, where we also analyze its computational complexity. Since local optimization makes small modifications at each step, this approach is expected to be most efficient when the input set of graphs is already sufficiently diverse. Thus, when we combine several algorithms, local optimization is always the last step.

\subsection{Iterative graph generative modeling}

Neural generative models are known to be a powerful tool for generating graphs that imitate a given distribution~\citep{you2018graphrnn,martinkus2022spectre,vignac2023digress}. Hence, we aimed to investigate whether such approaches can be used for generating graphs that are structurally diverse. In this case, there is no predefined distribution that needs to be captured. We address this via the following iterative procedure. The process starts from an initial graph set $S_0$ and then iteratively enhances the diversity. At each iteration, the current set of graphs $S_i$ is used to train a generative model. Then, this model is used to generate a significantly larger set of new graphs. From this new set, a smaller subset of diverse graphs $S_{i+1}$ is selected via the greedy approach. We expect that $S_{i+1}$ is more diverse than $S_i$. So, we repeat the process by training a neural generative model on the new set $S_{i+1}$. For the neural network architecture, we use Discrete Denoising Diffusion Model (DiGress) \citep{vignac2023digress}. We refer to Appendix~\ref{app:iggm} for a detailed description of our approach.

\section{Experiments}\label{sec:experiments}

In this section, we analyze and compare the algorithms for generating diverse graphs described above. Then, we analyze generated graphs and discuss how the choice of a particular graph distance affects the structures of the obtained graphs.

\paragraph{Setup}

In our experiments, we consider four representative distance measures: heat and wave NetLSD~\citep{netLSD}, Graphlet Correlation Distance~\citep{GCD}, and Portrait Divergence~\citep{Portrait-Div}. We select these distances to be diverse: NetLSD is based on the Laplacian eigenvalues (we use NetLSD-heat and NetLSD-wave variants), Graphlet Correlation Distance (GCD) uses local structures, while Portrait Divergence (Portrait-div) takes into account both local and global properties. A detailed description of these measures is given in Appendix~\ref{sec:distances_app}.

Following Section~\ref{sec:set_diversity}, we choose Energy as the diversity measure. Formally, we optimize and report the following measure:
$$
\frac{1}{N(N-1)} \sum\limits_{i \neq j}  \frac{1}{D(G_i, G_j) + \epsilon},
$$
where $\epsilon$ is a small constant added for numerical stability. As soon as we fix the diversity measure that we rely on, the goal of each algorithm is to optimize this measure. In other words, in contrast to standard machine learning problems, we do not face the problem of overfitting.

We evaluate the following approaches described in Section~\ref{sec:algorithms}: Greedy, Genetic, local optimization (LocalOpt), and iterative graph generative modeling (IGGM). Our evaluation also includes the comparisons against simple baseline models, specifically the Erd\H{o}s-R\'enyi graphs sampled with various $p$ (ER-mix) and a sample from diverse random graph generators described in Section~\ref{app:generators}. As an additional illustration, we also include a sample of graphs generated by the GraphWorld benchmark~\citep{palowitch2022graphworld}, where we vary the model parameters to increase diversity of the obtained graphs. In most of the experiments, we generate $N=100$ graphs with $n=16$ nodes. We also conduct experiments with non-neural algorithms on the set of 100 graphs with size $n=64$.

Let us note that the algorithms introduced in Section~\ref{sec:algorithms} can be easily combined: the output of one algorithm can serve as an input to another. Thus, we evaluate the combinations of the algorithms. We use the notation `$\to$' to denote the transition between the consecutive algorithms. Note that Greedy is the only strategy that does not generate any new graphs. Hence, its initial set should be already sufficiently diverse. Thus, we use graphs generated by diverse random graph models described in Section~\ref{app:generators}.

We assume that for most algorithms, the most time-consuming operation is computing a graph representation (that is used for distance computations). Therefore, all algorithms except IGGM use the total limit of 3M generated graphs. For IGGM, the number of generated graphs is limited to 1M since training the graph generative model is time-consuming. In the tables, we use the square brackets to denote the number of computed graph representations for an algorithm or sub-algorithm.

\paragraph{Numerical comparison}

In this section, we numerically analyze how well different approaches optimize the chosen diversity measure. Table~\ref{tabl::algos_result_main} shows the results for selected algorithms and baselines. For more algorithms, please refer to Table \ref{tabl::algos_results} in Appendix, where we also report the standard deviation.

First, we note that all the proposed algorithms significantly improve the performance of the basic algorithms ER-mix and Random Graph Generators. Similarly, the diversity of GraphWorld is far from optimal. This is not surprising since GraphWorld does not directly optimize the diversity of graph structures and relies on the relatively simple stochastic block model.

Among the non-neural algorithms, the best performance is achieved by a combination of Greedy, Genetic, and LocalOpt (applied in this order). Such a combination is natural: Greedy starting from a set generated by different random graph generators is the simplest way to get an initial diverse set of graphs. Then, Genetic uses enough randomness to create all kinds of graph patterns to choose from. After that, LocalOpt is used to make final tuning with small graph modifications. In turn, the neural-network-based method IGGM gives a significant boost in diversity for GCD and Portrait-div distances and exhibits comparative results for NetLSD-heat. Note that it uses less budget for generated graphs but also requires training a graph generative neural model several times.

\begin{table*}[t]
\caption{Energy optimization results; see Table~\ref{tabl::algos_results} in Appendix for the extended results}
\label{tabl::algos_result_main}
\vspace{-5pt}
\begin{small}
 \begin{center}
\begin{tabular}{lcccc}
\toprule
\multicolumn{1}{c}{Setup}                           & \multicolumn{1}{c}{GCD} & \multicolumn{1}{c}{Portrait-div} & \multicolumn{1}{c}{NetLSD-heat} & \multicolumn{1}{c}{NetLSD-wave} \\
\midrule
ER-mix                                              & 0.281   & 43.057  & 72.387   & 0.583   \\
GraphWorld                                      & 0.466  & 3.917  & 5.108 & 0.621 \\
Random Graph Generators                                      & 0.553  & 6.009  & 116.685  & 1.334  \\

\midrule
Greedy[3M]                                           & 0.156         & 1.274           & 0.681                   & 0.123                   \\

ER-mix$\to$Genetic[3M]                                 & 0.139       & 1.264            & 0.677                & \textbf{0.117}      \\
Greedy[1M]$\to$Genetic[2M]                            & 0.139       & 1.263            & 0.674      & 0.118                \\
ER-mix$\to$Genetic[1M]$\to$LocalOpt[2M]               & 0.138       & 1.259              & 0.675               & \textbf{0.117 }      \\
Greedy[1M]$\to$LocalOpt[2M]                & 0.139       & 1.255            & 0.679                & 0.118                \\
Greedy[1M]$\to$Genetic[1M]$\to$LocalOpt[1M] & 0.135      & 1.245              & \textbf{0.673}      & \textbf{0.117}      \\
IGGM[1M]                                          & \textbf{0.120}          & \textbf{1.213}               & 0.675                            & 0.148    \\         
\bottomrule
\end{tabular}
\end{center}
\end{small}
\end{table*}

\begin{table*}[t]
\caption{Diversity measured by Average; the graphs are the same as in Table~\ref{tabl::algos_result_main}}
\label{tabl::avg_distance_results}
\vspace{-5pt}
 \begin{center}
 \begin{small}
\begin{tabular}{lcccc}
\toprule
\multicolumn{1}{c}{Setup}                           & \multicolumn{1}{c}{GCD} & \multicolumn{1}{c}{Portrait-div} & \multicolumn{1}{c}{NetLSD-heat} & \multicolumn{1}{c}{NetLSD-wave} \\
\midrule
ER-mix                                                                       & 4.350 & 0.607    & 0.936        & 6.302        \\
GraphWorld                                                                   & 2.510 & 0.317    & 1.270        & 2.784        \\
Random Graph Generators                                       & 2.059           & 0.212          & 0.025          & 1.190          \\

\midrule
Greedy[3M]                                       & 6.901 & 0.819    & 3.067        & 10.099       \\
ER-mix$\to$Genetic[3M]                              & 7.553 & 0.830    & 3.056        & \textbf{10.625}       \\
Greedy[1M]$\to$Genetic[2M]                    & 7.614 & 0.826    & 3.072        & 10.549       \\
ER-mix$\to$Genetic[1M]$\to$LocalOpt[2M]           & 7.734 & 0.831    & 3.056        & 10.621    \\
Greedy[1M]$\to$LocalOpt[2M]                     & 7.494 & 0.830    & 3.051        & 10.485       \\
Greedy[1M] $\to$  Genetic[1M] $\to$  LocalOpt[1M] & 7.835 & 0.836   & \textbf{3.073}        & 10.485 \\
IGGM[1M] & \textbf{8.687} & \textbf{0.854}    & 3.066        & 10.364 \\

\bottomrule
\end{tabular}
\end{small}
\end{center}
\end{table*}

Let us note that the basic algorithms in Table~\ref{tabl::algos_result_main} (above the line) are not designed to optimize Energy and thus may accidentally generate pairs of graphs that are very close to each other, leading to significantly worse diversity. Hence, as an additional illustration of our results, we also report the average pairwise distance for the same sets of graphs. The results are shown in Table~\ref{tabl::avg_distance_results} and they are consistent with Table~\ref{tabl::algos_result_main}.

We also conducted additional experiments on larger graphs with $n=64$ nodes. The results are shown in Table \ref{tabl::algos_results_bug_graphs} in Appendix, and they are consistent with the results on smaller graphs.

\paragraph{Examples of generated graphs}

Since in our main experiments we generated 100 graphs, each having only 16 nodes, it is possible to visually inspect the generated graphs. To show that the generated graphs have very different structural patterns, we show some examples in Figure~\ref{fig::shiny_graphs_sample_portrait}. This sample of graphs is chosen from the resulting set of the Genetic algorithm with diversity based on Portrait-div. It is clear that graphs vary in density, internal structure, number of cycles, and planarity. Importantly, these graphs are clearly distinct from the input distribution ER-mix. More examples showing all generated graphs are shown in Figures~\ref{fig::portrait_genetic_all_graphs}-\ref{fig::gcd_iggm_all_graphs}. We see that when combined with Portrait-div, both Genetic and IGGM generate visually diverse and interesting structures. One can also notice that NetLSD tends to generate many extremely sparse graphs, while GCD generates more dense graphs.

\paragraph{Analysis of structural characteristics}

\begin{figure*}
  \centering
  \includegraphics[width=0.8\linewidth]{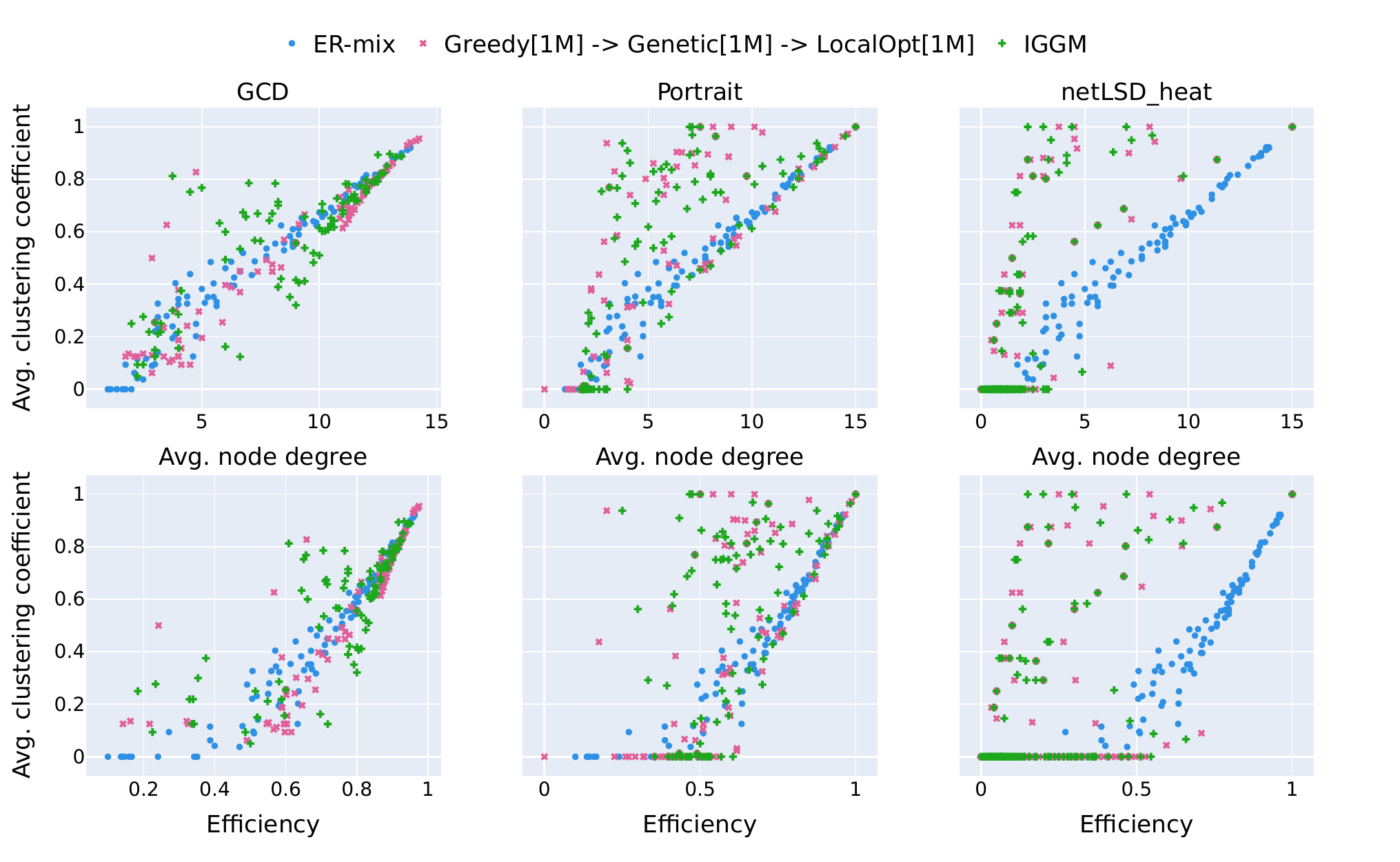} 
  \caption{Joint distribution of graph characteristics for GCD, Portrait-div, NetLSD-heat}
  \label{fig::2D_algorithms_main}
\end{figure*}

Additionally, we analyze the structural characteristics of generated graphs. Figure~\ref{fig::2D_algorithms_main} visualizes various characteristics for the ER-mix baseline, IGGM, and the combination of Greedy, Genetic, and LocalOpt. Obtaining a set of graphs in which an individual characteristic is diverse is easy: this can be achieved with the basic ER-mix. Hence, we visualize the joint distributions of pairs of characteristics.

It is clearly seen that compared to ER-mix, our algorithms lead to significantly more diverse pairs of characteristics. Also, it is worth mentioning that we often should not expect to cover all possible combinations: for instance, if the average degree is close to its maximal achievable value $n-1$, then the clustering coefficient has to be close to $1$. For more algorithms and combinations of characteristics, please refer to Figure~\ref{fig::2D_algorithms_appendix} in Appendix.

\paragraph{Comparing graph distances}\label{app:comparing-graph-distances}

Visualizing pairwise graph characteristics can help in the analysis and comparison of different graph distances. Indeed, the generated graphs significantly depend on a particular graph distance used for computing diversity. We visualize this in Figure~\ref{fig::2D_graph_distances_main}. One observation that we make is that NetLSD is significantly biased towards sparse graphs (for clarity, Figure~\ref{fig::2D_graph_distances_main} shows the results for NetLSD-heat, but the wave variant has the same patterns). Indeed, for most of the generated graphs, the clustering coefficient is zero. Similarly, the average degree is usually small. However, despite this, the remaining NetLSD graphs may cover diverse combinations of characteristics. The fact that NetLSD is biased towards sparse graphs can also be seen in Figures~\ref{fig::netlsd_heat_iggm_all_graphs}-\ref{fig::netlsd_wave_iggm_all_graphs}, where we visualize the generated graphs. Then, Figure~\ref{fig::2D_graph_distances_main} shows the differences between GCD and Portrait-div. For instance, Portrait-div is significantly more diverse in terms of efficiency. This is natural, taking into account that GCD is based on local structures, while Portrait-div accounts for global characteristics.

\begin{figure*}[t]
  \centering
  \includegraphics[width=\linewidth]{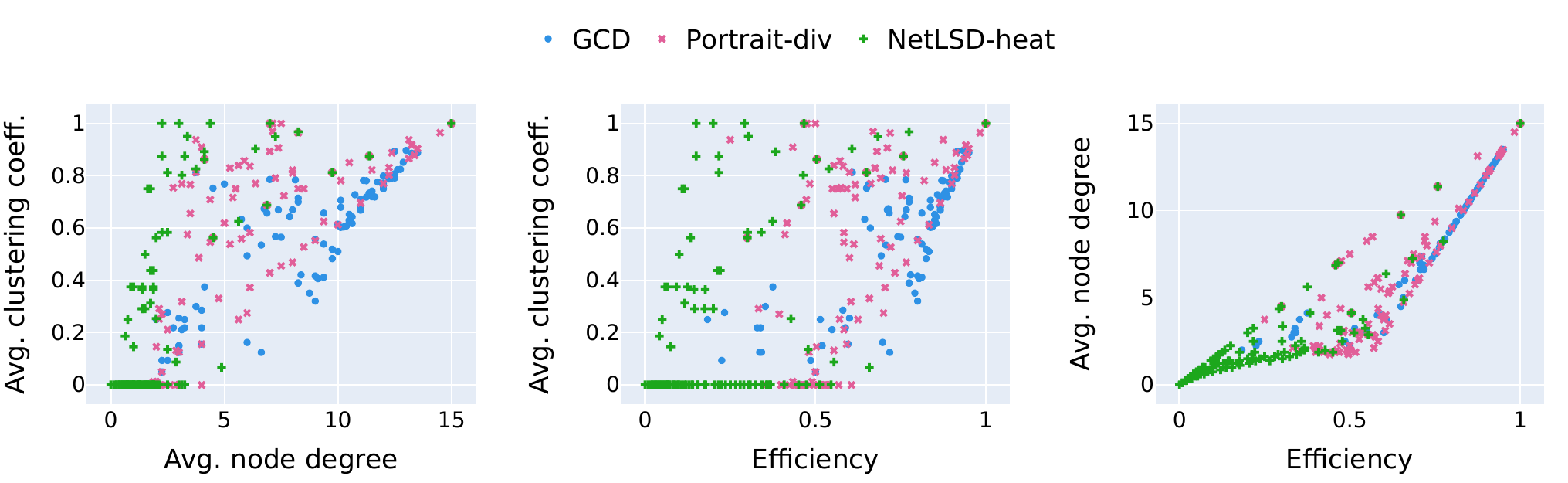} 
  \caption{Joint distribution of graph characteristics for graphs from IGGM: comparing graph distances}
  \label{fig::2D_graph_distances_main}
\end{figure*}

\section{Conclusion}\label{sec:conclusion}

In this work, we formulate the problem of generating structurally diverse graphs that can serve as representative instances for various graph-related tasks. We show that the problem is challenging as it is non-trivial to define what it means for a set of graphs to be diverse. In this regard, we propose desirable properties that a good diversity measure is expected to satisfy and choose a diversity measure based on them. Then, we show that random graph models do not provide sufficient diversity and propose various alternative approaches. Importantly, all the proposed algorithms can be applied to arbitrary diversity measures. Via a series of experiments, we show that the proposed approaches are capable of generating diverse graphs, both in terms of diversity measures and structural characteristics.

In this work, we have only made a first step to analyzing the problem of generating diverse graphs. There are plenty of promising directions for future research, and we hope that our work will encourage researchers to dive deeper into this problem. One particularly important challenge is scalability. If the number of nodes $n$ becomes large, then the number of possible graphs grows very fast, and for some methods (e.g., LocalOpt that uses single edge modifications) covering the whole space may become infeasible. Secondly, we believe that more advanced algorithms will be developed in the future. Also, further discussions on how to measure diversity and how to choose a proper graph distance seem to be very useful. Finally, it would be great to see practical applications of diverse graphs.

\bibliography{references}


\newpage

\appendix

\section{Measuring diversity of a set of elements}\label{app:diversity}

\subsection{Related work on diversity measures}\label{app:related-diversity}

The concept of diversity is useful for various applications such as image and molecule generation or recommender systems. Diversity can be used to evaluate how representative is a given dataset, how diverse is a generated set, or for choosing a representative subset from a dataset (diversity sampling). In this section, we discuss relevant studies on measuring diversity.

A recent paper by~\citet{friedman2023vendi} suggests measuring diversity via the Vendi Score (VS). This score requires a kernel function defined on pairs of elements. Then, a similarity matrix consisting of the pairwise kernel values is constructed and the Vendi Score is defined as the exponential of the Shannon entropy of the eigenvalues of the normalized similarity matrix.

Another recent work~\citep{xie2023much} investigates the problem of measuring the coverage (diversity) of a set $S$ in the context of molecular generation. The authors consider the following known measures: the average, maximum, and minimum pairwise distances. \citet{xie2023much} propose three axioms that a good measure of coverage is expected to satisfy: monotonicity, subadditivity, and dissimilarity. Then, they show that none of the abovementioned measures satisfy all three axioms simultaneously and propose a new coverage measure called \#Circles. \#Circles equals the maximum number of disjoint circles with centers placed at the elements of the set. However, while satisfying all the properties, this measure has two disadvantages. First, the complexity of calculating this measure is exponential. Second, it requires the radius parameter to be defined and a good value depends on a dataset. In our setup when the set of graphs dynamically changes during the optimization, this measure cannot be applied.

Let us discuss how the properties from~\citet{xie2023much} relate to our study. \emph{Monotonicity} and \emph{subadditivity} describe the behavior of a measure when the size of the set increases and thus are not relevant in our setup. Hence, only the dissimilarity axiom remains. This axiom requires that the diversity of a pair of elements monotonically increases if one of them moves apart from the other. Our \emph{monotonicity} property generalizes this axiom. 

\subsection{Properties of popular measures}\label{app:properties-existing}

In this section, we revisit existing measures of diversity and show that typically used ones do not satisfy our requirements.

\begin{wraptable}[16]{r}{0.6\textwidth}
\vspace{-10pt}
\caption{Some known diversity measures}
\label{tab:diversity_measures}
\vspace{-2pt}
\resizebox{\linewidth}{!}{
\begin{tabular}{lcHH}
\toprule
Measure & Formula & Monotonicity & Uniqueness \\
\midrule
Average & $\frac{2}{N(N-1)} \sum \limits_{\substack{i \neq j}}D(G_i, G_j)$ & \color{Green}{Yes} & \color{Red}{No}\\
SumAverage & $\frac{1}{N} \sum \limits_{\substack{i \neq j}}D(G_i, G_j)$ & \color{Green}{Yes} & \color{Red}{No}\\
Diameter & $\max \limits_{\substack{i \neq j}}D(G_i, G_j)$ & \color{Red}{No} & \color{Red}{No}\\
SumDiameter & $\sum \limits_{i} \max \limits_{\substack{j \neq i}}D(G_i, G_j)$ & \color{Red}{No} & \color{Red}{No} \\
Bottleneck & $\min \limits_{\substack{i \neq j}}D(G_i, G_j)$ & \color{Red}{No} & \color{Green}{Yes}\\
SumBottleneck & $\sum \limits_{i} \min \limits_{\substack{j \neq i}}D(G_i, G_j)$ & \color{Red}{No} & \color{Red}{No}\\
\#Circles$(t)$, $t \ge 0$ & $\max \limits_{C \subseteq [N]}|C|$ s.t. $D(G_i,G_j)>t \, \forall \, i \neq j \in C$  & \color{Red}{No} & \color{Red}{No} \\
Vendi Score & $\exp{\left(- \sum \limits_{i=1}^N \lambda_i \log(\lambda_i) \right)}$ & \color{Red}{No} & \color{Red}{No}\\
\bottomrule
\end{tabular}
}
\end{wraptable}
Table~\ref{tab:diversity_measures} lists diversity measures discussed in previous studies~\citep{xie2023much,friedman2023vendi}. Here we assume that $S=\{G_1, \dots, G_N\}$. There are six well-known diversity measures defined over pairwise distances (Average, Diameter, and Bottleneck with two types of aggregation). Then, \#Circles is the measure proposed in~\citet{xie2023much} that also depends on the pairwise distances. Finally, Vendi Score is proposed in~\citet{friedman2023vendi} and it is defined in terms of the pairwise kernels. We give the complete definition of Vendi Score below.

\vspace{5pt}

\begin{theorem}\label{thm:popular-measures}
Among the previously used measures listed in Table~\ref{tab:diversity_measures}, monotonicity is satisfied only by Average and SumAverage, while Uniqueness is satisfied only by Bottleneck.
\end{theorem}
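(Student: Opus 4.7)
The plan is to traverse Table~\ref{tab:diversity_measures} row by row, dispatching each of the eight measures against both properties by either a one-line argument or a small explicit counterexample with $N\le 4$. The positive directions are immediate: Average and SumAverage are, up to a positive normalization, equal to $\sum_{i\neq j}D(G_i,G_j)$, so strictly increasing any one pairwise distance strictly increases them (monotonicity); Bottleneck equals $\min_{i\ne j}D(G_i,G_j)$, which is strictly positive on any set of $N$ distinct elements but vanishes on $S_{ij}$ because the two copies of $G_j$ are at distance $0$ (uniqueness).

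All five extremum-based measures (Diameter, SumDiameter, Bottleneck, SumBottleneck, \#Circles) share a common root cause for failing their missing property: a $\max$ or $\min$ is locally constant. A single $N=4$ example settles monotonicity for all of them: take $D(G_1,G_2)=D(G_3,G_4)=1$ and all four cross-distances equal to $1/2$; perturbing one cross-distance upward by $\varepsilon\in(0,1/2)$ leaves every per-element maximum at $1$, every per-element minimum at $1/2$, and \#Circles$(t)$ unchanged for any $t$ outside the perturbed interval, so none of the five measures strictly increases. For uniqueness, an equilateral triangle ($N=3$, all pairwise distances equal) immediately kills Diameter and SumDiameter; an $N=4$ configuration of two tight pairs $\{G_1,G_2\}$, $\{G_3,G_4\}$ with large cross-distances kills SumBottleneck, since replacing $G_1$ by a copy of $G_3$ destroys $G_2$'s close neighbor and pushes its minimum distance from the small value up to the large one; and the same near-symmetric example, in which the maximum clique avoids the replaced vertex, kills \#Circles. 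For Average and SumAverage uniqueness, place $G_1$ at small distance $\delta$ from each of $G_2,G_3,G_4$ while the remaining three pairs lie at distance $M$; replacing $G_1$ by a copy of $G_2$ changes the total pairwise distance from $3\delta+3M$ to $5M$, which is strictly larger whenever $M>3\delta/2$.

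The main obstacle is the Vendi Score, whose spectral definition makes direct intuition unreliable. For monotonicity, a direct $3\times 3$ eigenvalue computation suffices: starting from the positive semidefinite kernel with $K_{12}=0.9$, $K_{13}=K_{23}=0.1$ and lowering $K_{13}$ to $0$---which corresponds to \emph{increasing} $D(G_1,G_3)$ under any monotone-decreasing kernel such as $k(d)=e^{-d^2}$---produces a strictly smaller VS, contradicting monotonicity. For uniqueness, the counterexample exploits that VS rewards balance of the normalized spectrum rather than distinctness of elements. Take $N=4$ with $G_1$ orthogonal to a tight cluster $\{G_2,G_3,G_4\}$: the kernel is block diagonal with a $1\times 1$ block $[1]$ and a $3\times 3$ block $I_3+s(J_3-I_3)$ for $s$ close to $1$, giving the imbalanced spectrum $(1,\,1+2s,\,1-s,\,1-s)$. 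Replacing $G_2$ by a copy of $G_1$ yields a kernel that is block diagonal with two $2\times 2$ blocks, the all-ones matrix and $I_2+s(J_2-I_2)$, so its spectrum is $(2,\,0,\,1+s,\,1-s)$. This spectrum is strictly more balanced after normalization than the previous one, so a short calculation (e.g., $s=0.95$ gives $\mathrm{VS}\approx 1.99$ before versus $\mathrm{VS}\approx 2.12$ after) shows that VS strictly increases, violating uniqueness. Once the VS counterexamples are in hand, the theorem follows by collecting the eight cells.
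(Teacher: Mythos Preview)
Your proposal is correct and covers all sixteen cells of the table. The route, however, differs from the paper's in two respects that are worth noting.

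First, for the five extremum-based measures you use a single unified $N=4$ configuration (two tight pairs at distance $1$, cross-distances $1/2$) and perturb one cross-distance to handle monotonicity for all of them at once, then reuse the tight-pairs geometry to dispatch SumBottleneck and \#Circles for uniqueness. The paper instead gives a separate $N=3$ real-line example for each measure and each property. Your batching is more economical and makes the common mechanism (local constancy of $\max/\min$) explicit, at the cost of slightly more bookkeeping per case; the paper's approach is more pedestrian but each example is self-contained and trivially embeds in $\mathbb{R}$. One small point: your \#Circles uniqueness argument needs $\delta$ and $M$ to be chosen relative to the fixed threshold $t$ (take $\delta\le t<M$), which you should say explicitly---the paper does the analogous thing by building its example $\{0,t/2,t\}$ directly from $t$. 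Similarly, your Average uniqueness example with a ``close hub'' $G_1$ only embeds in a metric space when $3\delta/2<M\le 2\delta$; the inequality you state ($M>3\delta/2$) is sufficient to make the sum go the right way, but you should add the upper constraint if metricity is required.

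Second, for the Vendi Score you give different (and arguably more transparent) kernels: a $3\times 3$ example where lowering one off-diagonal entry drops the entropy, and a block-diagonal $4\times 4$ example in which duplicating the orthogonal element rebalances the spectrum and raises VS. The paper instead exhibits two ad hoc $3\times 3$ PSD matrices with numerically computed VS values. Your construction explains \emph{why} VS fails uniqueness (it rewards spectral balance, not distinctness), whereas the paper's examples are opaque but require no eigenvalue reasoning. Both approaches are valid; yours is more illuminating, the paper's is easier to verify by direct computation.
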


\vspace{-5pt}

\begin{proof}

For each measure, we check whether it satisfies \emph{monotonicity} and \emph{uniqueness} defined in Section~\ref{sec:diversity}.

\paragraph{Average} $\frac{2}{N(N-1)} \sum \limits_{\substack{i \neq j}}D(G_i, G_j)$

The monotonicity property is obviously satisfied. 

We prove that uniqueness is not satisfied by the following example. Consider a multiset consisting of four different values: $\{0,10,11,12\}$. The distances between the values are induced from the real line. The diversity of this set is $\frac{2}{4 \cdot 3} (10+11+12+1+2+1) = \frac{37}{6}$. If we replace $10$ by the second copy of $0$, we get a multiset $\{0,0,11,12\}$ with a larger value of diversity $\frac{2}{4 \cdot 3} (0+11+12+11+12+1) = \frac{47}{6}$.

\paragraph{SumAverage} $\frac{1}{N} \sum \limits_{\substack{i \neq j}}D(G_i, G_j)$

Since SumAverage equals Average multiplied by $N$, their properties are similar: monotonicity is obviously satisfied for SumAverge and the same example gives the contradiction for uniqueness.

\paragraph{Diameter} $\max \limits_{\substack{i \neq j}}D(G_i, G_j)$

We prove that monotonicity is not satisfied by the following example. Consider one multiset with three elements and pairwise distances $10,7,4$ and another multiset with tree elements and pairwise distances $10,7,5$. Monotonicity requires the second set to have larger diversity, but the diversity of both sets is equal to $10$.

To show that uniqueness is not satisfied, we consider the following example. Take a multiset consisting of three different elements: $\{0,1,2\}$. The distances between the elements are induced from the real line. The diversity of this set is $2$. If we replace $1$ with the second copy of $0$, we get a multiset $\{0,0,2\}$ with the same diversity $2$.

\paragraph{SumDiameter} $\sum \limits_{i} \max \limits_{\substack{j \neq i}}D(G_i, G_j)$

We prove that monotonicity is not satisfied by the following example. Consider one multiset with three elements and pairwise distances $10,7,4$ and another multiset with tree elements and pairwise distances $10,7,5$. Monotonicity requires the second set to have larger diversity, but the diversity of both sets is equal to $10+10+7=27$.

We prove that uniqueness is not satisfied by the following example. Consider a multiset consisting of three different elements: $\{0,1,2\}$. The distances between the elements are induced from the real line. The diversity of this set is $2+2+1=5$. If we replace $1$ with the second copy of $0$, we get a multiset $\{0,0,2\}$ with larger diversity $2+2+2=6$.

\paragraph{Bottleneck} $\min \limits_{\substack{i \neq j}}D(G_i, G_j)$

To show that monotonicity is violated, consider a multiset consisting of three different elements: $\{0,1,5\}$. The distances between the elements are induced from the real line. The diversity of this set is $1$. Monotonicity requires the set $\{0,1,6\}$ to have larger diversity. But the diversity of the set $\{0,1,6\}$ is also equal to $1$.

Uniqueness is satisfied. Indeed, any multiset with pairwise different elements has diversity greater than $0$, and any multiset with two copies of one element has diversity $0$. 

\paragraph{SumBottleneck} $\sum \limits_{i} \min \limits_{\substack{j \neq i}}D(G_i, G_j)$

To show that monotonicity is violated, consider a multiset consisting of four different elements: $\{0,1,5,6\}$. The distances between the elements are induced from the real line. The diversity of this set is $1+1+1+1=4$. Monotonicity requires the set $\{0,1,7,8\}$ to have larger diversity. But the diversity of the set $\{0,1,7,8\}$ is also equal to $1+1+1+1=4$.

We prove that uniqueness is not satisfied by the following example. Consider a multiset consisting of four different elements: $\{0,1,9,10\}$. The distances between the elements are induced from the real line. The diversity of this set is $1+1+1+1=4$. If we replace $1$ with the second copy of $9$, we get a multiset $\{0,9,9,10\}$ with larger diversity $9+0+0+1=10$. 

\paragraph{\#Circles$(t)$} $\max \limits_{C \subseteq [N]}|C|$ s.t. $D(G_i,G_j)>t \, \forall \, i \neq j \in C$

We prove that monotonicity is not satisfied by the following example. Consider one multiset with three elements and pairwise distances $10,7,4$ and another multiset with tree elements and pairwise distances $10,8,4$. Monotonicity requires the second set to have larger diversity, but: for $t<4$ the diversity of both sets is equal to $3$; for $4 \le t<10$ the diversity of both sets is equal to $2$; for $10 \le t$ the diversity of both sets is equal to $1$.

Now, we fix $t$ and prove that uniqueness is not satisfied by the following example. Consider a multiset consisting of three different elements: $\{0,\frac{t}{2},t\}$. The distances between the elements are induced from the real line. The diversity of this set is $2$. If we replace $\frac{t}{2}$ by the second copy of $0$, we get a multiset $\{0,0,t\}$ with the same diversity $2$. 

\paragraph{Vendi Score} $\exp{\left(- \sum \limits_{i=1}^N \lambda_i \log(\lambda_i) \right)}$
\label{subsec:vendi_score}

First, let us give a formal definition of this measure.

\begin{definition}[Vendi Score, \citet{friedman2023vendi}] Let $S=\{G_1, \dots, G_N\}$ be a multiset and let $k:S \times S \rightarrow \R$ be a similarity function, such that $\forall i: k(G_i,G_i)=1$ and the matrix $K \in \R^{N \times N}$ defined by $K_{i,j}:=k(G_i,G_j)$ is positive-semidefinite and symmetric. Denote by $\lambda_1, \dots \lambda_N$ the eigenvalues of the matrix $K/N$. Then, the Vendi Score is defined as the exponential of the Shannon entropy of the eigenvalues of $K/N$:
\begin{equation}
   \exp{\left(- \sum \limits_{i=1}^N \lambda_i \log(\lambda_i) \right)}, 
\end{equation}
where we use the convention $0 \log 0 = 0$.   
\end{definition}

Our monotonicity property uses distances instead of similarities, but we can naturally reformulate it in terms of pairwise similarities by replacing the condition $D(G_i,G_j) \le D(g(G_i), g(G_j))$ with the condition $k(G_i,G_j) \ge k(g(G_i), g(G_j))$. Our uniqueness does not use the notion of distance, so we can use it as is. Thus, we can check whether the Vendi Score satisfies monotonicity and uniqueness.

We prove that monotonicity is not satisfied with the following example. Consider two positive-semidefinite symmetric matrices:
\begin{equation}
K_1 =\begin{pmatrix}
1 & 0.1 & 0.8 \\
0.1 & 1 & 0.4 \\
0.8 & 0.4 & 1
\end{pmatrix},
\quad
K_2= \begin{pmatrix}
1 & 0.2 & 0.8 \\
0.2 & 1 & 0.4 \\
0.8 & 0.4 & 1
\end{pmatrix}.
\end{equation}
Monotonicity requires $K_1$ to have higher diversity than $K_2$. But Vendi Score of $K_1$ is $2.203$ and Vendi Score of $K_2$ is $2.212>2.203$.

To show that uniqueness is not satisfied, consider the
following example. Take two positive-semidefinite symmetric matrices:
\begin{equation}
K_1 =\begin{pmatrix}
1 & 0.6 & 0.2 \\
0.6 & 1 & 0.9 \\
0.2 & 0.9 & 1
\end{pmatrix},
\quad
K_2= \begin{pmatrix}
1 & 1 & 0.2 \\
1 & 1 & 0.2 \\
0.2 & 0.2 & 1
\end{pmatrix}
\end{equation}
If Vendi Score has uniqueness property, then $K_1$ must have higher diversity than $K_2$. But the Vendi Score of $K_1$ is $1.81$ and Vendi Score of $K_2$ is $1.86>1.81$.

\end{proof}

\subsection{Illustrating the effect of the Energy parameter $\gamma$}\label{app:gamma}

To illustrate the effect of $\gamma$, let us consider a simple setup when points are distributed in a square. We place 50 points uniformly at random in a unit square and optimize Energy for $\gamma=0.1, 0.3, 0.5, 1, 2, 3, 10$. The results are shown in Figure~\ref{fig:gamma}. Clearly, for small $\gamma$ the coverage of the non-boundary region is not sufficient. However, for larger values (including $\gamma=1$) the distribution looks sufficiently diverse.

\begin{figure}
    \centering
    \begin{subfigure}{0.24\textwidth}
       \includegraphics[width=\textwidth]{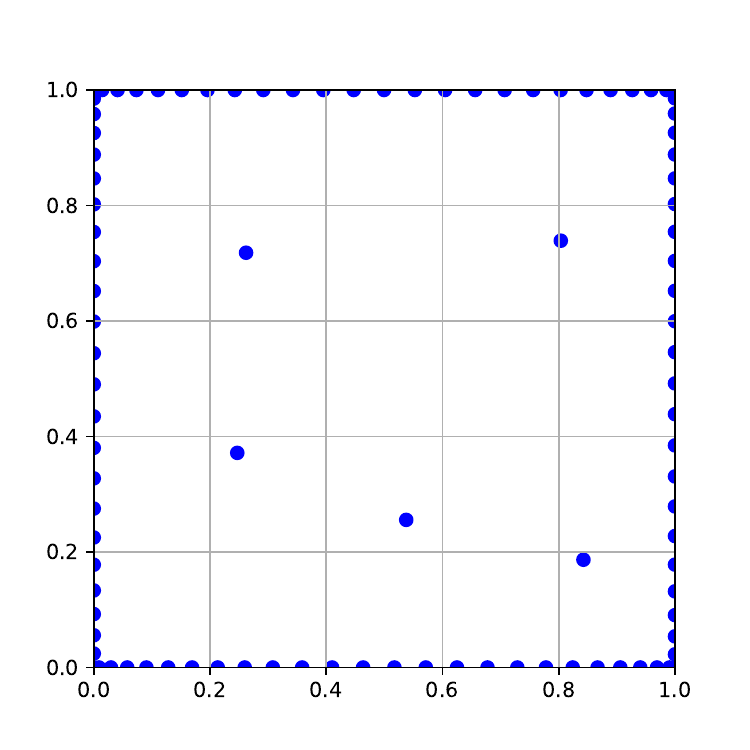}
       \caption{$\gamma=0.1$}
    \end{subfigure}
    \begin{subfigure}{0.24\textwidth}
       \includegraphics[width=\textwidth]{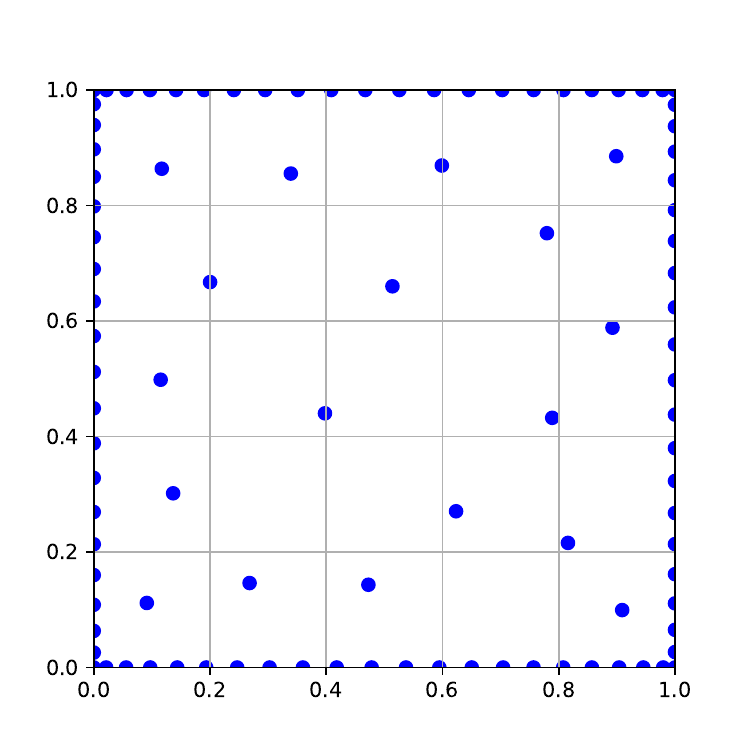}
       \caption{$\gamma=0.3$}
    \end{subfigure}
    \begin{subfigure}{0.24\textwidth}
       \includegraphics[width=\textwidth]{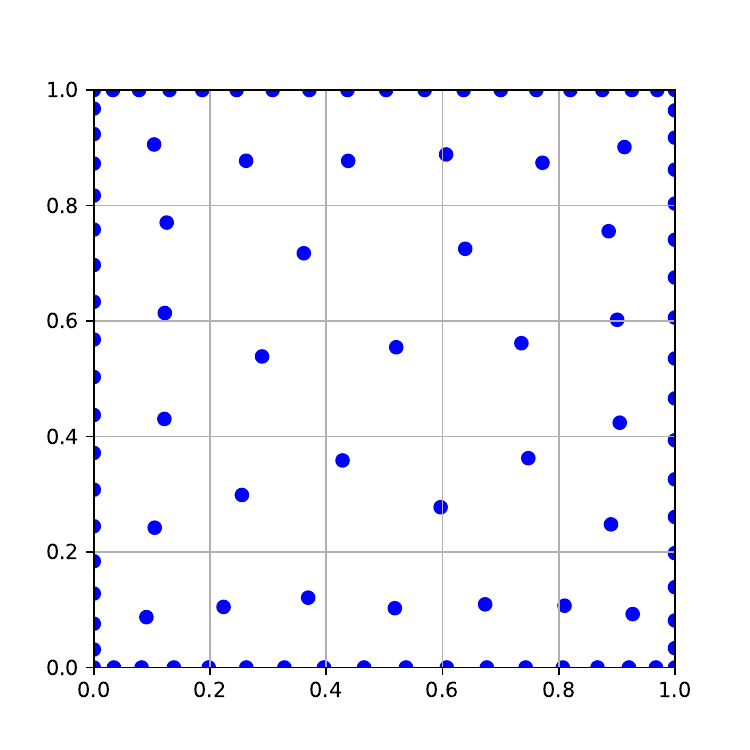}
       \caption{$\gamma=0.5$}
    \end{subfigure}
    \begin{subfigure}{0.24\textwidth}
       \includegraphics[width=\textwidth]{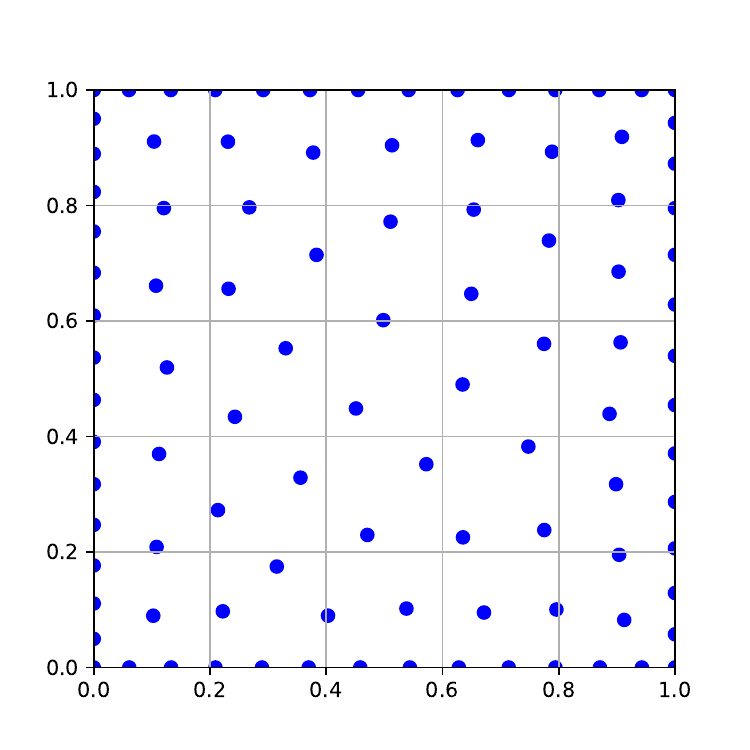}
       \caption{$\gamma=1$}
    \end{subfigure}
    \begin{subfigure}{0.24\textwidth}
       \includegraphics[width=\textwidth]{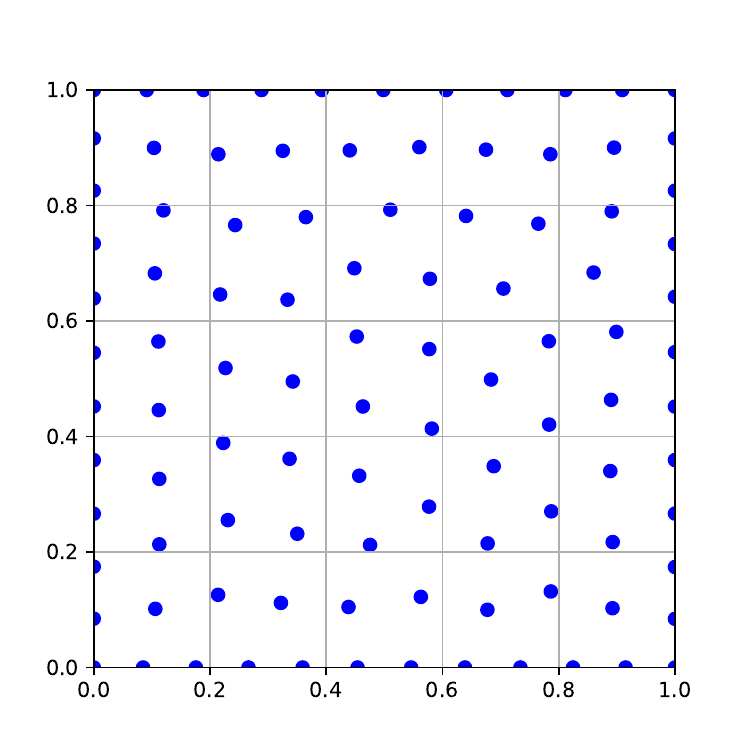}
       \caption{$\gamma=2$}
    \end{subfigure}
    \begin{subfigure}{0.24\textwidth}
       \includegraphics[width=\textwidth]{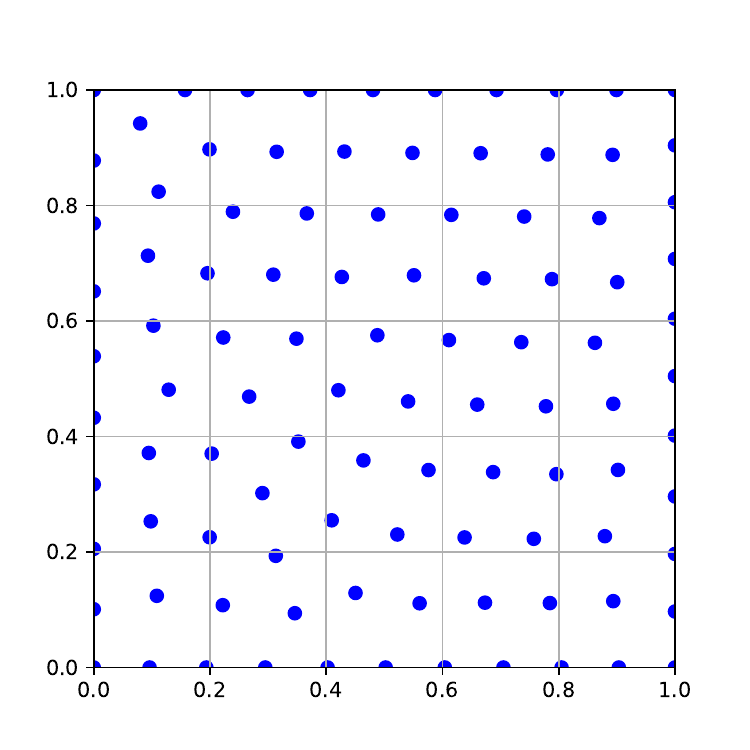}
       \caption{$\gamma=3$}
    \end{subfigure}
    \begin{subfigure}{0.24\textwidth}
       \includegraphics[width=\textwidth]{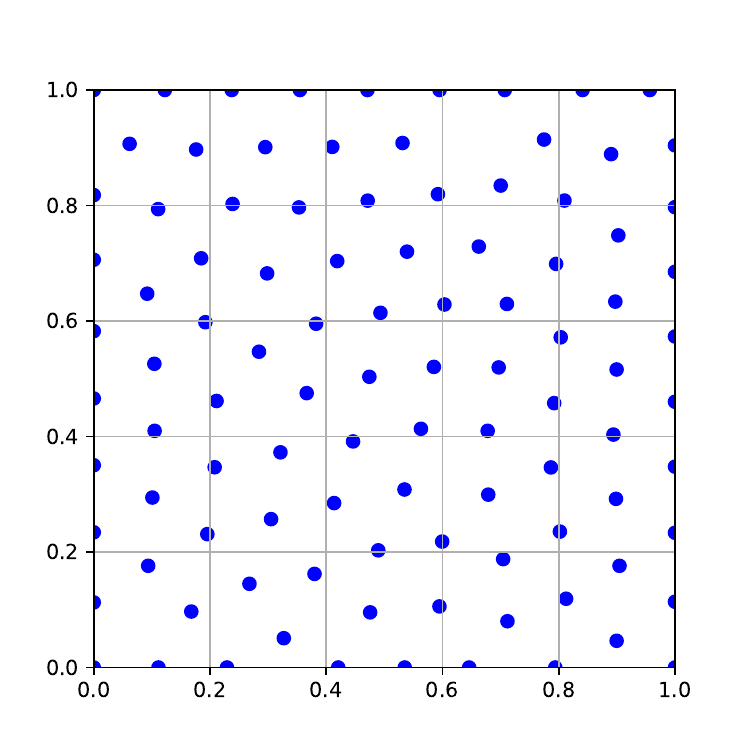}
       \caption{$\gamma=10$}
    \end{subfigure}
    \caption{The effect of the Energy parameter $\gamma$ for points distributed in a square}
    \label{fig:gamma}
\end{figure}

\subsection{Unboundedness of Energy}\label{app:energy-problem}

When two elements of a set get closer to each other, Energy can become arbitrarily large. Such behavior can cause some interpretability problems when we compare the obtained Energy for different algorithms. In terms of the desirable properties, the issue is that both monotonicity and uniqueness can be violated if not all elements are different. However, it turns out that if one requires monotonicity and uniqueness to be satisfied even when some elements coincide, then it becomes very challenging to construct a measure satisfying all the desirable properties. This problem was recently addressed by~\citet{mironov2024measuring} who require monotonicity and uniqueness to be satisfied for all the initial configurations and also add an important property of continuity. The authors construct two examples of measures satisfying all the properties, but both of them are NP-hard to compute and thus are infeasible to use in practice. Whether there exists a computationally feasible measure satisfying all the properties is currently unknown.

Thus, we have chosen Energy as the best option available. The advantage of Energy is that when it is optimized, the obtained distribution is indeed diverse (see, e.g., Figure~\ref{fig:gamma}). In other words, Energy can be degenerate for configurations that are not sufficiently diverse, but it can be used to compare algorithms that optimize diversity. Also, recall that for numerical stability, we add a small constant to the denominator of Energy, so it does not go to infinity in practice.

Importantly, for the completeness of our study, in Table~\ref{tabl::avg_distance_results} we report the average pairwise distance. As discussed above, this measure cannot be used as a function that is optimized by an algorithm since it can lead to degenerate solutions. That is why we only use it as an assistive measure and apply it to the sets of graphs obtained by optimizing other objectives.

\section{Considered graph distances}\label{sec:distances_app}

In this section, we describe the graph distances that we use in our experiments.

\paragraph{NetLSD~\citep{netLSD}}
NetLSD treats a graph as a dynamic system and simulates heat and wave diffusion processes on nodes and edges of a given graph, followed by measuring system conditions at fixed timestamps.  More formally, let $\lambda_j$ be the $j$-th smallest eigenvalue of the normalized Laplacian of a graph $G$. For a timestamp $t$, we define \emph{heat trace} $h_t$ and \emph{wave trace} $w_t$ of a graph $G$ as follows:
\begin{equation}
    h_t = \sum_j e^{-t\lambda_j}, \,\,\, w_t = \sum_je^{-it\lambda_j}\,.
\end{equation}
Here $t > 0$ for the heat trace and $t \in [0, 2\pi)$ for the wave trace. 

Then, the \emph{heat trace signature} and \emph{wave trace signature} of $G$ are defined as the collections of the corresponding traces at different timestamps, i.e., $h(G) = \{h_t\}_{t \in \mathcal{T}_{h}}$ and $w(G) = \{w_t\}_{t \in \mathcal{T}_{w}}$. As in the original article, we use 250 log-spaced timestamps between $10^{-2}$ and $10^2$ for $\mathcal{T}_{h}$ and 250 equally-spaced timestamps between $0$ and $2\pi$ for $\mathcal{T}_{w}$, respectively.

Finally, the NetLSD distance (heat or wave) between two graphs $G$ and $G'$ is computed as any distance measure between the corresponding signatures. Following~\citet{netLSD}, we use the Euclidean distance.

\paragraph{Graphlet Correlation Distance (GCD)~\citep{GCD}}
Graphlet Correlation Distance computes the distance between two graphs based on their graphlet statistics. Here graphlets are defined as connected graphs with 2, 3, or 4 nodes, with one of them marked. There are exactly $15$ such graphs.

Consider any graph $G$ with $n$ nodes. Choose any node $v \in G$ and any graphlet $R$. We count the number of subgraphs in $G$ which are isomorphic to $R$, such that the marked node of $R$ coincides with $v$. Doing this for fixed $v$ and all graphlets, we get $15$ numbers corresponding to $v$. These numbers are not independent: if we know the counts for some graphlets, we can find the counts for some other graphlets. Getting rid of $4$ redundant counts, we are left with $11$ values (and the corresponding graphlets). So now we have a vector of length $11$ for each node of $G$. We combine these vectors in a matrix $L$ with $n$ rows and $11$ columns. Using this matrix $L$, we compute $11 \times 11$ {\it Graphlet Correlation Matrix} (GCM) as follows. The cell $(i,j)$ of GCM contains Spearman's correlation coefficient between $i$-th and $j$-th columns of the matrix $L$. In other words, this cell contains the correlation between the number of times a node is a part of the $i$-th graphlet and the number of times the same node is a part of the $j$-th graphlet.

The {\it graphlet correlation distance} between two graphs $G$ and $G'$ is then computed as the Euclidean distance between the upper-triangular parts of their GCMs:
\begin{equation}
    \label{GCD}
    D(G, G') = \sqrt{\sum_{1 \le i < j \le 11}\left(GCM_{G}(i, j)-GCM_{G'}(i, j)\right)^2}.
\end{equation}

\paragraph{Portrait Divergence~\citep{Portrait-Div}}
The {\it network portrait}~\citep{Portrait} of a graph $G$ is a matrix $B$ with elements $b_{lk}$ being the number of such nodes $v$ that there are exactly $k$ nodes at a distance $l$ from $v$. This matrix captures both local and global graph statistics. Based on the portrait $B$, one can compute the joint probability of choosing a pair of nodes at a distance $l$ from each other and that the first node has $k$ nodes at a distance $l$ from it:
\begin{equation}
    \label{Potrait distribution}
    \P_B(k,l) = \P(k|l)\P(l) = \left(\frac{\sum_{k'=0}^nk'b_{lk'}}{n}\right)\frac{b_{lk}}{\sum_cn_c^2},
\end{equation}
where $\sum_cn_c^2$ is the normalization over the sizes $n_c$ of the connected components. Then, the \emph{portrait divergence} between two graphs $G$ and $G'$ is computed as the Jensen-Shannon divergence of the distributions $P_{B(G)}(k,l)$ and $P_{B(G')}(k,l)$.


\section{Algorithms for diversity optimization}\label{app:algorithms}

In this section, we give more details on the algorithms that we use for diversity optimization. Recall that our algorithms assume that the overall diversity of a set of graphs $\{G\} \cup S$ can be written as $\div(\{G\} \cup S) = g(f(G,S), c(S))$, where $g$ is a function that is monotone w.r.t.~its arguments. The function $f(G,S)$ is called \emph{fitness} of a graph $G$ w.r.t.~a set of graphs $S$.

The code of our experiments is publicly available at \url{https://github.com/Abusagit/Challenges-on-generating-structurally-diverse-graphs}.

\subsection{Greedy algorithm}\label{sec:app_greedy}

\paragraph{Algorithm}

Assume that we are given a pre-generated set $\hat{S}$ of $M$ graphs with diverse structural properties, $M \gg N$. Let $S$ be our constructed set of diverse graphs which is initially an empty set. Then, our greedy algorithm consists of $N$ steps. We start with $S = \emptyset$ and at the first step we select a graph from the set $\hat{S}$ uniformly at random to be the first element in $S$. At each subsequent step, we choose $G\in\hat{S}$ with the maximal value of the fitness $f(G,S)$. Then, we add this graph $G$ to the set $S$. After $N$ steps, we get a set $S$ of size $N$, which is our approximation of the maximally diverse set.

\paragraph{Analysis} While being very simple, the greedy algorithm turns out to be very effective when supplied with a sufficiently diverse set $\hat{S}$. The following theorem provides the bounds on diversity. While in this paper we focus on the Energy diversity measure, we also provide the results for Average and Bottleneck measures.

\begin{theorem}
    \label{thm::greedy_bounds}
    Assume that the diversity function is Energy, Average, or Bottleneck. Let $\hat{S}$ be a set of graphs. If the greedy algorithm selected a subset $S$ from $\hat{S}$, then 
    \begin{align*}
    &\div(S) \ge \frac{1}{2} \div(\bar{S}) \textnormal{ for Average and Minimum}, \\
    &\div(S) \ge {2^\gamma}  \div(\bar{S}) \textnormal{ for Energy} (\gamma),
    \end{align*}
    where $\bar{S}$ is the maximally diverse subset of $\hat{S}$ satisfying $|\bar{S}|=|S| = N$.
\end{theorem}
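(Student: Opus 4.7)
My approach is to analyze the greedy algorithm's per-step contributions. Write $S_k=\{p_1,\dots,p_k\}$ for the subset built after $k$ greedy steps, let $\bar S=\{q_1,\dots,q_N\}$ be an optimal subset of $\hat S$, and decompose each diversity into contributions that greedy optimizes locally. Throughout I would use the triangle inequality for $D$, which holds for the graph distances considered in the paper (possibly after a square root).

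For the Bottleneck case, the argument is a standard covering one. Let $d=\min_{i\neq j}D(p_i,p_j)$ and let $p_b$ be the first greedy point realizing this value. Greediness forces $\min_{i<b}D(q,p_i)\le d$ for every $q\in\hat S\setminus S_{b-1}$, so $\bar S$ is covered by $b-1<N$ balls of radius $d$ centered at $S_{b-1}$; pigeonhole puts two points of $\bar S$ in the same ball, and the triangle inequality gives $\div(\bar S)\le 2d=2\,\div(S)$.

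For the Average case, write $A(S)=\sum_{k=2}^N\sigma_k$ with $\sigma_k=\sum_{i<k}D(p_k,p_i)$, so that $\div(S)\propto A(S)$. Greediness gives $\sigma_k\ge\sum_{i<k}D(q,p_i)$ for every available $q$; averaging over $\bar S\setminus S_{k-1}$ and combining with the identity $\sum_{q\in\bar S}D(q,p)\ge A(\bar S)/(N-1)$ (obtained by summing $D(q_r,q_s)\le D(q_r,p)+D(p,q_s)$ over pairs) yields $\sigma_k\ge(k-1)A(\bar S)/(N(N-1))$. Summing over $k$ gives $A(S)\ge A(\bar S)/2$. I would carry this out first under $\bar S\cap S=\emptyset$ for clarity and then extend, observing that shared elements contribute identically to both sides.

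For the Energy case I run the analogous template with greedy minimizing $\phi_k(p)=\sum_{i<k}D(p,p_i)^{-\gamma}$, targeting $E(S):=\sum_{k=2}^N\phi_k(p_k)\le 2^\gamma E(\bar S)$ so that $\div(S)\ge 2^\gamma\,\div(\bar S)$ (since $\div=-E$ up to a positive constant). The $2^\gamma$ factor originates in the triangle inequality $\max(D(p,q_r),D(p,q_s))\ge D(q_r,q_s)/2$, which after raising to $-\gamma$ becomes $\min(D(p,q_r)^{-\gamma},D(p,q_s)^{-\gamma})\le 2^\gamma/D(q_r,q_s)^\gamma$; the task is to combine this reciprocal bound with the averaged greedy inequality $\phi_k(p_k)\le |\bar S\setminus S_{k-1}|^{-1}\sum_{q\in\bar S\setminus S_{k-1}}\phi_k(q)$ by pairing each reciprocal $D(p_i,q)^{-\gamma}$ with an appropriate pair in $\bar S$, using the sorted order of the distances $D(p_i,q)$ so that the max-bound always falls on the farther element. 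The main obstacle I anticipate is precisely this last step: reciprocals interact poorly with linear averaging, so the pairing must be chosen carefully to avoid picking up factors beyond $2^\gamma$. A secondary obstacle common to the Average and Energy analyses is handling $\bar S\cap S\neq\emptyset$, where at intermediate steps some $\bar S$ elements are unavailable as greedy candidates; I expect this to be resolved by a substitution argument showing that overlap only tightens the bound.
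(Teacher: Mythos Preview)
Your Bottleneck argument is fine and essentially dual to the paper's, but the Average and Energy arguments have real gaps. For Average, once some $q\in\bar S$ coincides with an earlier greedy pick $p_j$, the inequality $\sigma_k\ge\sum_{i<k}D(q,p_i)$ is no longer available for that $q$, so you cannot average over all of $\bar S$ and invoke your identity; ``shared elements contribute identically'' does not repair this, since the missing terms are precisely the ones tying the bound back to $A(\bar S)$. The paper sidesteps the overlap issue with a different and simpler argument: let $d=\max_{G,G'\in\hat S}D(G,G')$ be realized at $V_1,V_2$; then $\div(\bar S)\le d$ trivially, and from $D(A_i,V_1)+D(A_i,V_2)\ge d$ one of $V_1,V_2$ always witnesses $\sum_{i\le k}D(A_i,\cdot)\ge kd/2$, so by greediness each incremental average is at least $d/2$.

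For Energy, averaging is the wrong move altogether. After averaging you must control $\sum_{q}\sum_{i<k}D(p_i,q)^{-\gamma}$, but a single $p_i$ lying close to some $q\in\bar S$ makes this sum blow up with no counterpart in $E(\bar S)$; your $\min$ inequality bounds only the smaller reciprocal in each pair, never the larger, so no sorting scheme closes the gap. The paper instead exploits greediness in its sharp form $\phi_{k+1}(p_{k+1})\le\phi_{k+1}(q)$ for one carefully chosen $q$, produced by a pairing you do not have: match $A_1$ to its nearest element of $\bar S$, then $A_2$ to its nearest element of what remains, and so on, leaving one element $C_{k+1}$ unmatched. By construction $D(A_i,C_i)\le D(A_i,C_{k+1})$ for every $i$, whence $D(C_i,C_{k+1})\le 2D(A_i,C_{k+1})$ and thus $D(A_i,C_{k+1})^{-\gamma}\le 2^{\gamma}D(C_i,C_{k+1})^{-\gamma}$ termwise. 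Summing over $i$ and combining with the induction hypothesis for size $k$ (applied to $\{A_1,\dots,A_k\}$ versus $\{C_1,\dots,C_k\}$) yields $E(\{A_1,\dots,A_k,C_{k+1}\})\le 2^{\gamma}E(\bar S)$; greediness at step $k{+}1$ then finishes.
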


\begin{proof}

Let us prove the statement for each of the diversity measures.

\paragraph{Average}  Suppose $\max\limits_{G_1,G_2 \in \hat S} D(G_1, G_2) = d$ and this maximum is achieved for $G_1 = V_1, G_2 = V_2$ for some $V_1, V_2 \in \hat S$. 

Since all pairwise distances between the elements of $\bar {S} \subset \hat{S}$ are less than or equal to $d$, the average pairwise distance between the elements of $\bar {S}$ is also less than or equal to $d$. That is, $\div(\bar{S}) \le d$. Therefore, to prove the theorem it is sufficient to prove that for the greedily picked $S$ we have $\div(S) \ge \frac{d}{2}$. We prove it by induction on $N$, which is the size of the set $S$. The base case $N=2$ is trivial, since for every element there exists a second element at distance more than or equal to $\frac{d}{2}$ (by the triangle inequality, one of the ends of any diameter can serve as such a element).

Inductive step. Suppose the induction statement holds for $N=k$. Let us prove it for $N=k+1$. Suppose the greedy algorithm picked graphs $A_1, \dots, A_k, A_{k+1}$ in this order. We need to prove that the average pairwise distance between $A_1, \dots, A_k, A_{k+1}$ is at least $\frac{d}{2}$. By induction hypothesis, the average pairwise distance between $A_1, \dots, A_k$ is at least $\frac{d}{2}$. So, it is sufficient to prove that the average distance between $A_{k+1}$ and $A_1, \dots, A_k$ is at least $\frac{d}{2}$. This is equivalent to $\sum \limits_{i=1}^k D(A_i,A_{k+1}) \ge \frac{kd}{2}$.

By the triangle inequality, we have $D(A_i, V_1) + D(A_i, V_2) \ge D(V_1, V_2) = d$ for all $1 \le i \le k$. Summing these inequalities for all $i$, we get 
$$
\sum \limits_{i=1}^k D(A_i,V_{1}) + \sum \limits_{i=1}^k D(A_i,V_{2})\ge kd,
$$
therefore $\sum \limits_{i=1}^k D(A_i,V_{1}) \ge \frac{kd}{2}$ or $\sum \limits_{i=1}^k D(A_i,V_{2}) \ge \frac{kd}{2}$. Thus, by the construction of the greedy algorithm, we have $\sum \limits_{i=1}^k D(A_i,A_{k+1}) \ge \frac{kd}{2}$.

\paragraph{Bottleneck} Suppose $\bar S = C_1, \dots, C_N$ and $\div(\bar{S})=m$. Suppose the greedy algorithm has already made $k<N$ steps choosing graphs $A_1, \dots, A_k$, and the diversity of $A_1, \dots, A_k$ is at least $\frac{m}{2}$. We define an {\it open ball} with center in $G \in \hat S$ and radius $r$ as the set of all graphs in $\hat S$ such that their distance to $G$ is less than $r$. Consider $N$ open balls with centers in $C_1, \dots, C_N$ and radius $\frac{m}{2}$. Clearly, these balls do not intersect. Since $k<N$, there is at least one of $N$ balls that does not contain any of $A_1, \dots, A_k$. Therefore, the distance from the center of this ball to all $A_1, \dots, A_k$ is at least $\frac{m}{2}$.

Thus, the greedy algorithm can make one more step while still preserving the diversity of the chosen set not less than $\frac{m}{2}$. Since we prove it for all $k<N$, the diversity of the greedy algorithm result on step $N$ will be at least $\frac{m}{2} = \frac{1}{2} \div(\bar{S})$.

\paragraph{Energy}
We prove that $\div(S) \ge {2^\gamma} \div(\bar{S})$ by induction on $N$, which is the size of the set $S$. The base case $N=2$ is trivial, since for every element there exists a second element at distance more than or equal to $\frac{d}{2}$ (by the triangle inequality, one of the ends of any diameter can serve as such a element).

Inductive step. Suppose the statement holds for $N=k$. Let us prove it for $N=k+1$. Suppose the greedy algorithm made $k$ steps and picked graphs $A_1, \dots, A_k$ in this order. Suppose $\bar S = C_1, \dots, C_{k+1}$. We pair $A_1$ with the nearest graph from the set $\bar S$ (ties break randomly), w.l.o.g. we assume that this graph is $C_1$. We pair $A_2$ with the nearest graph from the set $\bar S \setminus \{C_1\}$, w.l.o.g. we assume that this graph is $C_2$. We pair $C_3$ with the nearest graph from the set $\bar S \setminus \{C_1, C_2\}$, w.l.o.g. we assume that this graph is $C_3$, etc. Note that the graph $C_{k+1}$ is left unpaired. Let us prove that $\div(\{A_1, \dots, A_k, C_{k+1}\}) \ge 2^\gamma\div(\bar{S})$, from which the statement of the theorem follows trivially.

By the induction hypothesis, we have $\div(\{A_1, \dots, A_k\}) \ge {2^\gamma} \div(\{C_1, \dots, C_{k}\})$. So, it is sufficient to prove that $-\sum \limits_{i=1}^k \frac{1}{D(A_i,C_{k+1})^\gamma} \le -2^\gamma \sum \limits_{i=1}^k \frac{1}{D(C_i,C_{k+1})}$ which is equivalent to $\sum \limits_{i=1}^k \frac{1}{D(A_i,C_{k+1})^\gamma} \ge 2^\gamma \sum \limits_{i=1}^k \frac{1}{D(C_i,C_{k+1})^\gamma}$. Given that $\gamma>0$, this inequality holds if we prove that $\frac{1}{D(A_i,C_{k+1})} \ge 2 \frac{1}{D(C_i,C_{k+1})}$ for every $i$. Rewriting the last inequality, we get $D(C_i,C_{k+1}) \le 2 D(A_i,C_{k+1})$. This inequality holds since:
$$D(C_i,C_{k+1}) \le D(C_i,A_i)+ D(A_i,C_{k+1}) \le 2 D(A_i,C_{k+1}).$$
Here the first inequality is the triangle inequality. The second inequality follows from the pairing construction which ensures that $D(C_i,A_i)$ is less than or equal to $D(A_i,C_{k+1})$.
\end{proof}

Now, let us analyze the time complexity of the greedy algorithm. Here and below, we assume that computing a graph representation for a distance $D$ for a graph with $n$ nodes requires $a$ numerical operations. Given that, the distance between two representations requires $b$ numerical operations. Every distance is computed only once and then the result is cached.

\vspace{5pt}
\begin{proposition}
    The time complexity of the greedy algorithm is $O((a + bN )M )$, where $M = |\hat{S}|$ (the size of the initial population) and $N= |S|$ (the size of the desired diverse population). 
\end{proposition}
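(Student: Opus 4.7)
The plan is to show the claimed bound by breaking the work done by the greedy algorithm into two distinct phases: representation computation (done once) and the $N$-step incremental fitness maintenance over the candidate pool $\hat S$. The structural observation that drives everything is that the fitness functions considered in this paper, namely $f(G,S)=\frac{1}{|S|}\sum_{G_i\in S}\frac{1}{D(G,G_i)^\gamma}$ for Energy and $f(G,S)=\min_{G_i\in S}D(G,G_i)$ for Bottleneck, can be updated incrementally when a single new graph is appended to $S$, provided pairwise distances are cached.

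First I would observe that, as a preprocessing step, one computes the vector representation of each of the $M$ graphs in $\hat S$ exactly once. By the stated cost model this contributes $O(aM)$. Every subsequent distance evaluation then costs only $O(b)$, and it is never computed more than once because we cache values.

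Next I would analyze a single iteration of the main loop. At the start of iteration $k+1$, the algorithm has selected $k$ graphs $A_1,\dots,A_k\in S$, and for every candidate $G\in\hat S$ the current fitness $f(G,\{A_1,\dots,A_k\})$ is stored. To move to iteration $k+2$ after appending some chosen $A_{k+1}$, it suffices to compute the $M$ distances $D(A_{k+1},G)$ for $G\in\hat S$ at cost $O(bM)$, update each stored fitness value by one arithmetic operation (a scalar addition for Energy or a $\min$ for Bottleneck) in total cost $O(M)$, and then scan the array of updated fitnesses to find the argmax in $O(M)$. Summing over the $N$ iterations gives $O(N(bM+M))=O(bNM)$ (absorbing $NM$ into $bNM$ under the mild assumption $b\ge 1$; otherwise one can simply keep both terms in the bound, which does not change the final expression).

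Adding the preprocessing and loop costs yields $O(aM)+O(bNM)=O((a+bN)M)$, as claimed. I do not expect any real obstacle: the only subtle point is explicitly verifying that the incremental update of $f(G,S)$ is $O(1)$ per candidate for both the Energy and Bottleneck instantiations used in the paper, which follows directly from their defining formulas and justifies that no distance is ever recomputed across iterations.
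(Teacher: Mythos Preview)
Your proof is correct and follows essentially the same decomposition as the paper: an $O(aM)$ preprocessing phase for representations, followed by $N$ iterations each costing $O(bM)$ for distance computations plus $O(M)$ for fitness updates and the argmax scan. Your explicit verification that the per-candidate fitness update is $O(1)$ for both Energy and Bottleneck is slightly more detailed than the paper's version, but the argument is the same.
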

\vspace{-5pt}
\begin{proof}

We do not take into account the time complexity of generating the population $\hat{S}$ since it heavily depends on the choice of graph generators. First, we compute the descriptors for all graphs in $\hat{S}$,  which is $aM$ operations. For all graphs, we set their current fitness to 0. Then, at each step of the greedy algorithm, we compute the distances from all elements of $\hat{S}\setminus{S}$ to an element added to $S$ on this step, which is $O(bM)$ operations. Using the computed distances, we update the current fitness for all graphs in $O(M)$ operations. The choice of an element that we add to $S$ can be done in $O(M)$ operations. So, each step requires $O(bM)$ operations. Given that we have $N$ steps, the resulting time complexity of the algorithm is $O(aM + bMN) = O((a + bN)M)$.
\end{proof}

\paragraph{Generating the set $\hat{S}$} Generating a sufficiently diverse set $\hat{S}$ is a necessary ingredient of the success of the greedy algorithm. To generate such a set, we use several random graph models with different properties. For each model, we iterate over the parameter combinations to get structurally different graphs. After this procedure, we assume that the set $\hat{S}$ is rich enough to contain a wide variety of graphs. The next subsection describes the models used for generating the set $\hat{S}$.

\subsection{Mixture of random graph generators}\label{app:generators}

All models below generate graphs with a fixed number of nodes $n$. In our experiments, we take $n=16$ and $n=64$. The graphs are undirected and without self-loops and multiple edges. We describe only the versions of models that we use in our experiments. Some of these models have more general versions with more parameters that we do not use and do not describe. To obtain a sample of graphs, we generate an (approximately) equal number of graphs for each combination of a model and its parameters.

\paragraph{Erd{\H{o}}s-R{\'e}nyi} In this model, each edge is included in the generated graph with probability $p$, independently from all other edges. 

We consider $p \in \left\{\frac{1}{16}, \frac{1}{8}, \frac{1}{4}, \frac{1}{2}, \frac{3}{4}, \frac{7}{8}, \frac{15}{16} \right\}$.

\paragraph{Preferential Attachment~\citep{barabasi1999emergence}} 
In preferential attachment models, nodes are added one by one, and each new node attaches to several previous ones with probabilities depending on their degrees. Here the parameter $m$ reflects the number of outgoing edges added together with each new node. The probability that a new node is attached to an older node $i$ is proportional to $k_i+\alpha$, where $k_i$ is the number of incoming edges of $i$ and $\alpha > 0$ is a parameter reflecting the attractiveness of nodes with zero incoming degrees.

We consider $m \in \{1, 2, 4\}$ and $\alpha \in \{m/2, m, 2m\}$. Such values of $\alpha$ give the power-law degree distribution with parameters $\gamma \in \{2.5, 3, 4\}$~\citep{ostroumova2013generalized}.

\paragraph{Holme‒Kim~\citep{holme2002growing}} 
This is a modification of the preferential attachment model, allowing for varying the number of triangles. Again, nodes are added one by one; each node appears with $m$ edges. Edges are also added one by one, and there are two types of edges: random and triangle-forming. Random edges connect the new node with an old one with probability proportional to the total degree of the old node. Each random edge can be followed by a triangle-forming edge (with probability $p$) or by another random edge (otherwise). To add a triangle-forming edge, we do the following: we uniformly sample a neighbor of the previously chosen node and connect the new node to this neighbor. 

In our experiments, we take $m \in \{2,4\}$ and $p \in \{0.5, 1\}$.

\paragraph{Random graph with power-law expected degree sequence~\citep{chung2002connected}} First, we sample a sequence $W=(w_1, \dots, w_n)$ from a power-law distribution with parameter $\gamma$. Then, we construct a graph by connecting the nodes $i$ and $j$ 
with probability $\frac{w_i w_j}{\sum \limits_{k} w_k}$.

Here we use $\gamma \in \{2, 2.5, 3, 4\}$.

\paragraph{Random geometric graph~\citep{penrose2003random}} 

First, $n$ nodes are placed uniformly at random in the unit cube in $\mathbb{R}^d$. Then, two nodes are joined by an edge if the distance between them is at most $r$.

We take $d \in \{2, 3\}$, where for $d=2$ we have $r \in \{0.2, 0.3, 0.5\}$ and for $d=3$ we have $r \in \{1/3, 0.5, 0.65\}$.

{\bf Random regular graph} This model generates a random graph, each node of which has degree $d$. 

We consider $d \in \{1, 2, 4, 8, 10\}$.

\paragraph{Stochastic block model~\citep{holland1983stochastic}} 
We divide $n$ nodes into $r$ sets ({\it blocks}) of (approximately) equal size. Each edge between two nodes from the same block is included with probability $p$, independently from all other edges. Each edge between two nodes from different blocks is formed with probability $q$, independently from all other edges. 

In the experiments, we use $r = 2$ and $r = 3$. 
For $r = 2$, we use all combinations of pairs $(p, q)$ from the set $\{(2s, s)\ | \ \forall s \in S\} \cup \{(s, 2s)\ | \ \forall s \in S \}$, where $S \in \left \{ \frac{1}{16}, \ \frac{1}{8}, \ \frac{1}{4}, \ \frac{1}{2}\right \}$. For $r=3$ we use $(p, q) \in \left \{ \left(\frac{1}{2}, \ \frac{1}{4}\right), \left(\frac{1}{5}, \ \frac{2}{5}\right) \right \}$.

Our implementations are based on \textit{NetworkX} and \textit{igraph} Python libraries.

\begin{figure*}
\centering
\includegraphics[width=\textwidth]{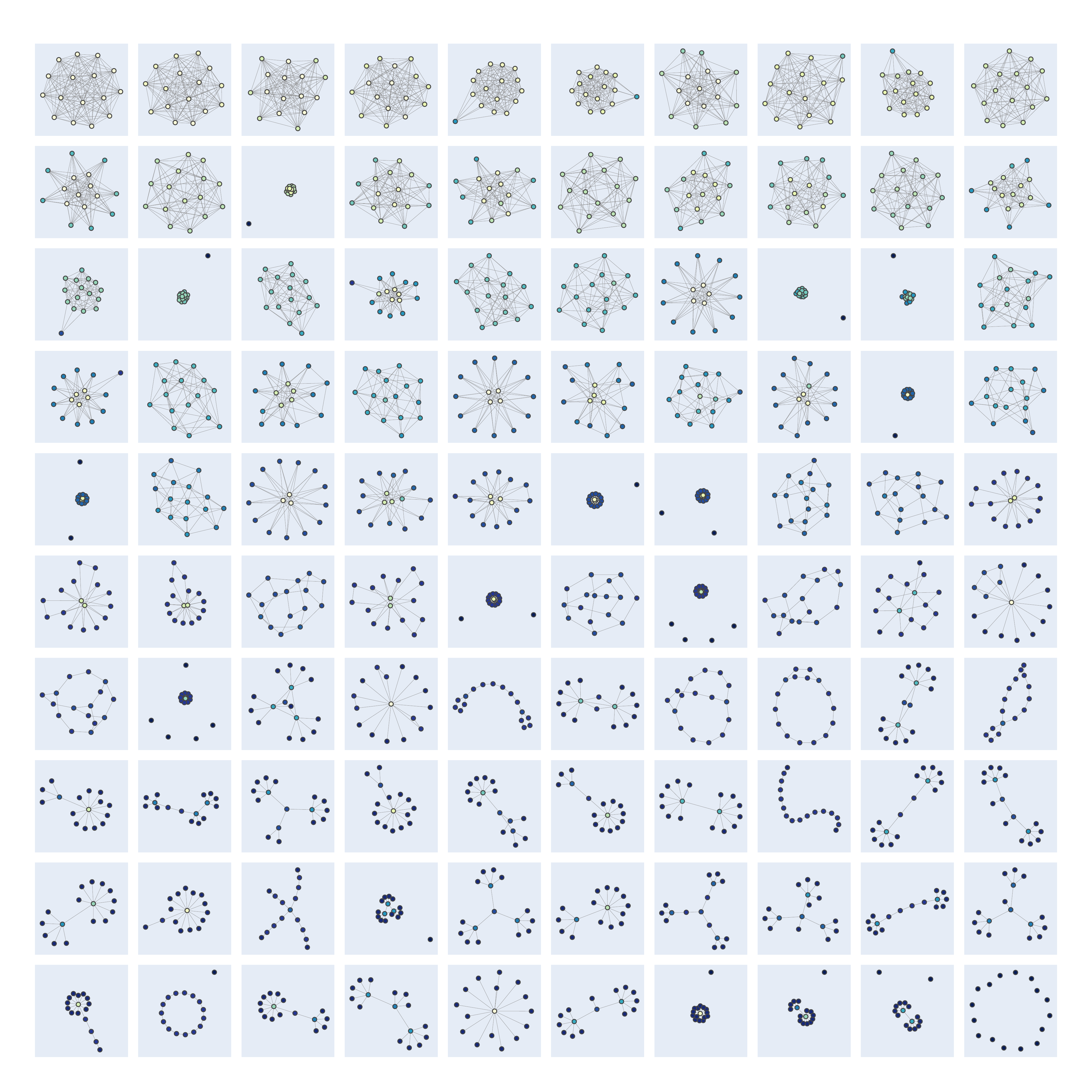} 
\caption{Graphs from Genetic with Portrait-div}
\label{fig::portrait_genetic_all_graphs}
\end{figure*}

\begin{figure*}
\centering
\includegraphics[width=\textwidth]{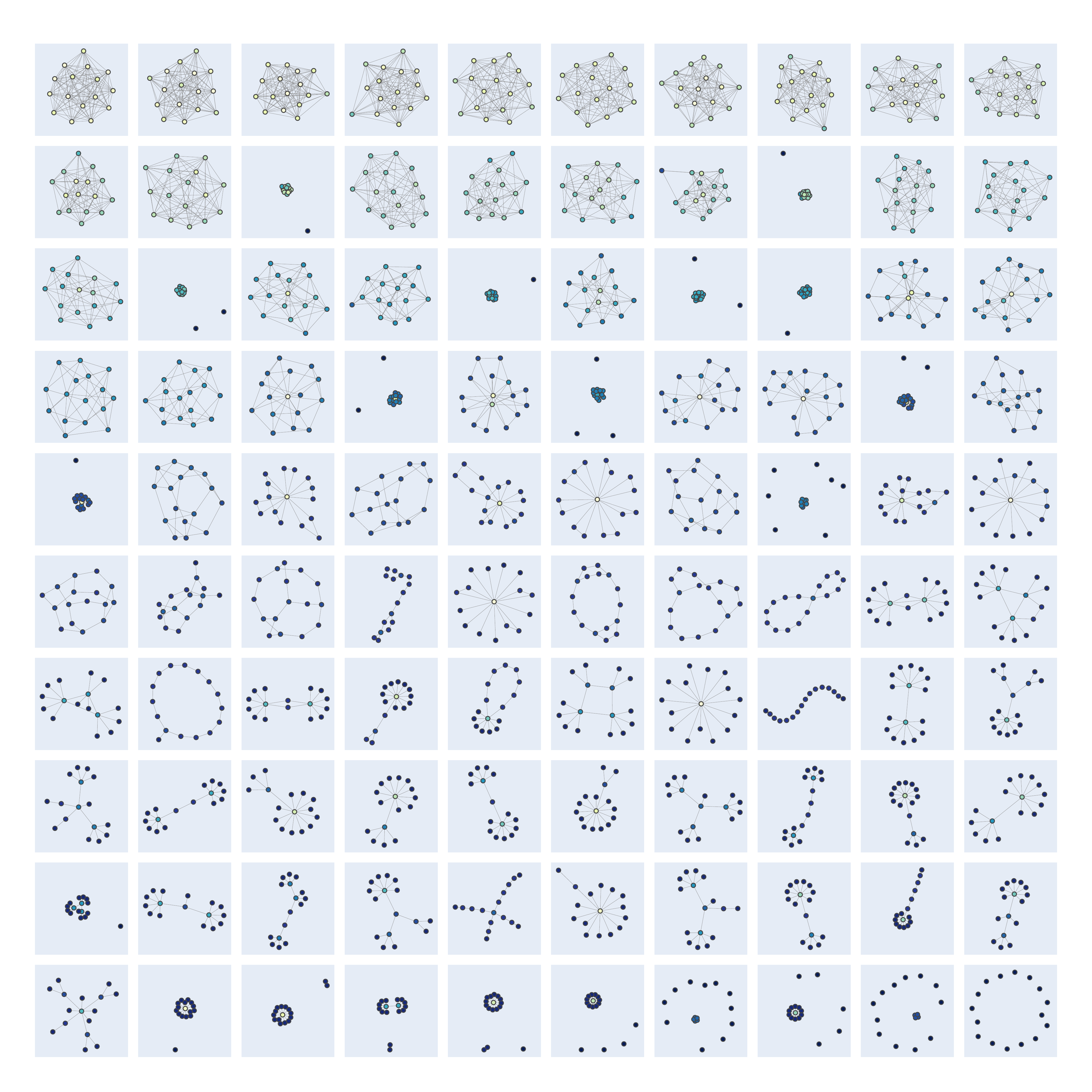} 
\caption{Graphs from IGGM with Portrait-div}
\label{fig::portrait_iggm_all_graphs}
\end{figure*}

\begin{figure*}
\centering
\includegraphics[width=\textwidth]{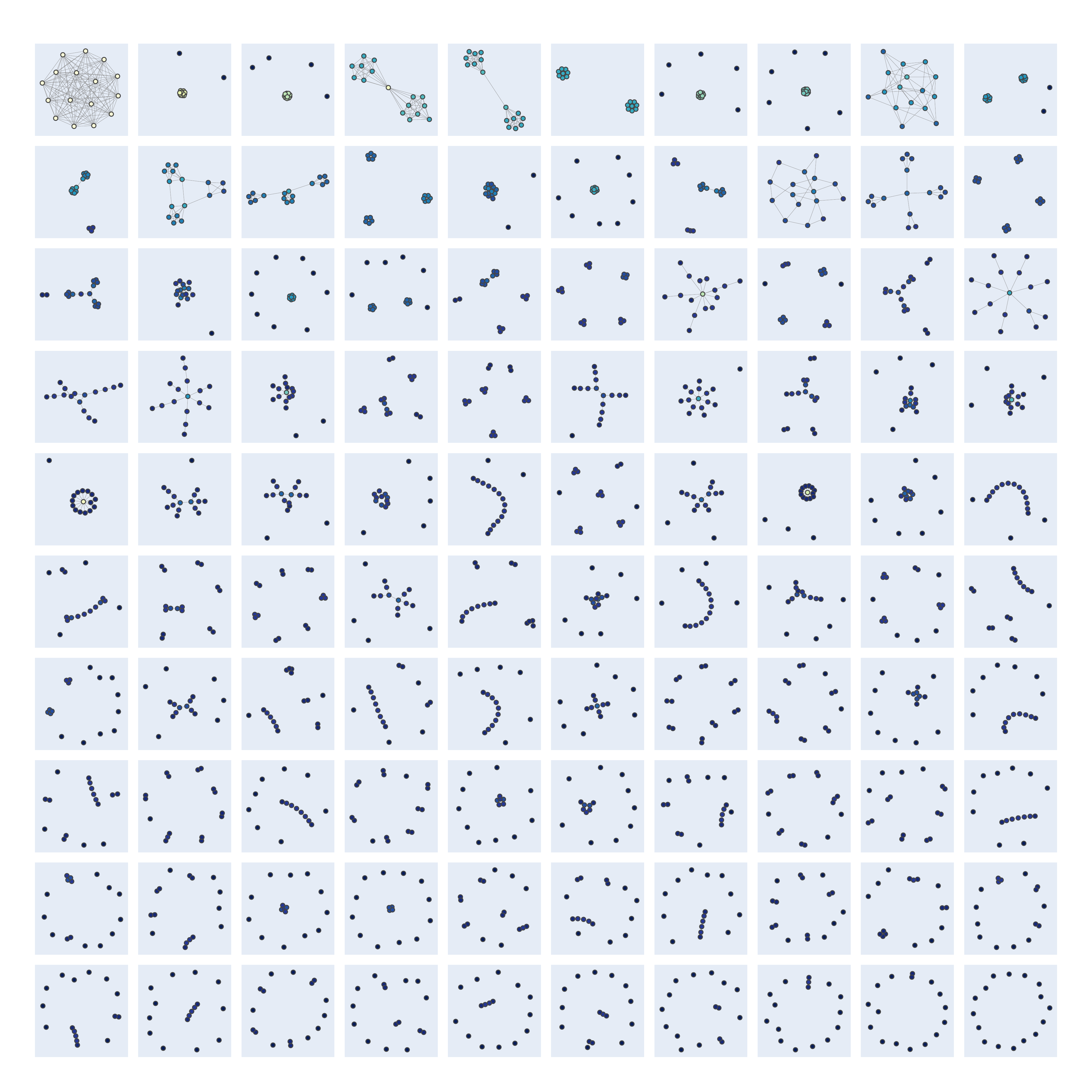} 
\caption{Graphs from IGGM with netLSD-heat: most of the graphs are sparse}
\label{fig::netlsd_heat_iggm_all_graphs}
\end{figure*}

\begin{figure*}
\centering
\includegraphics[width=\textwidth]{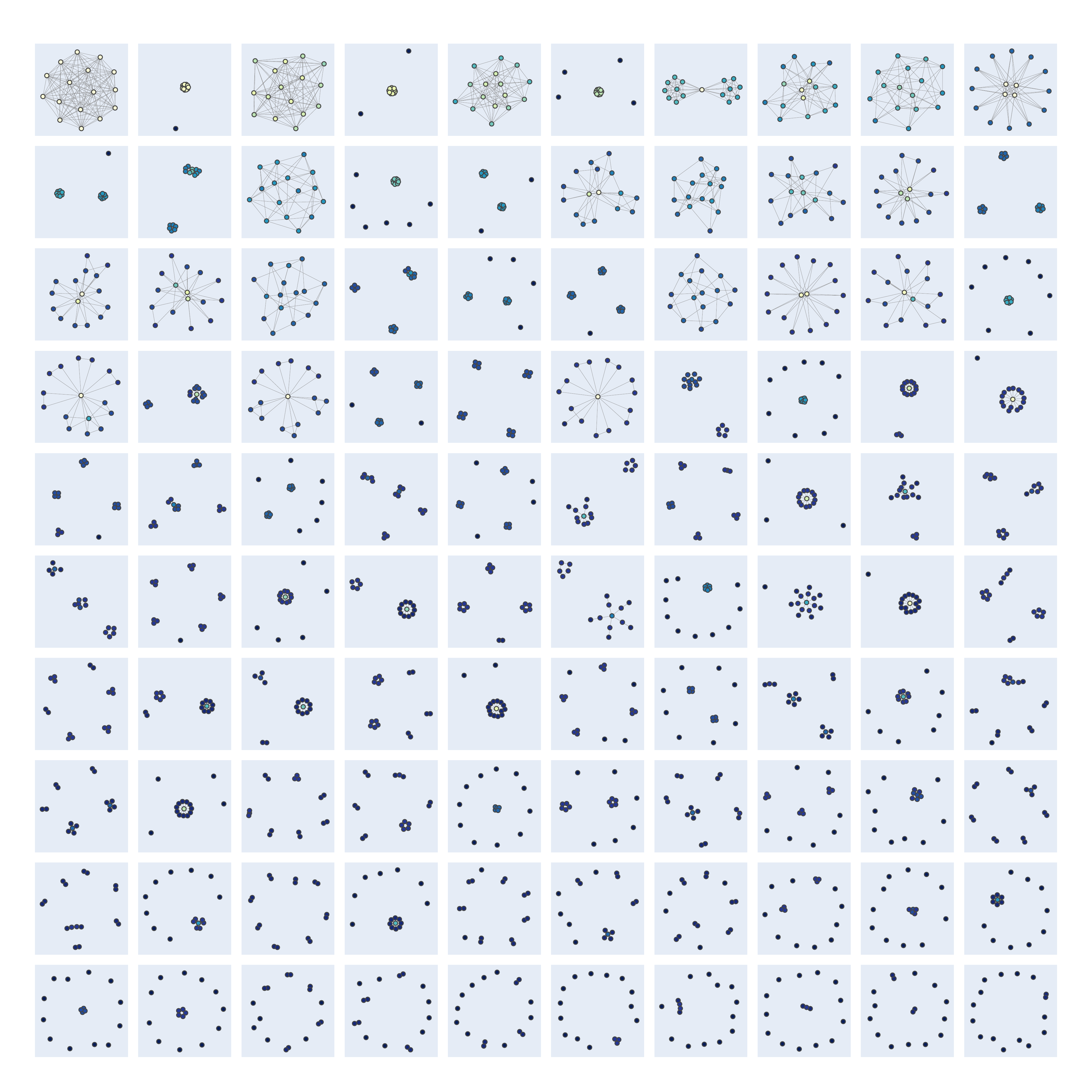} 
\caption{Graphs from IGGM with netLSD-wave: most of the graphs are sparse}
\label{fig::netlsd_wave_iggm_all_graphs}
\end{figure*}

\begin{figure*}
\centering
\includegraphics[width=\textwidth]{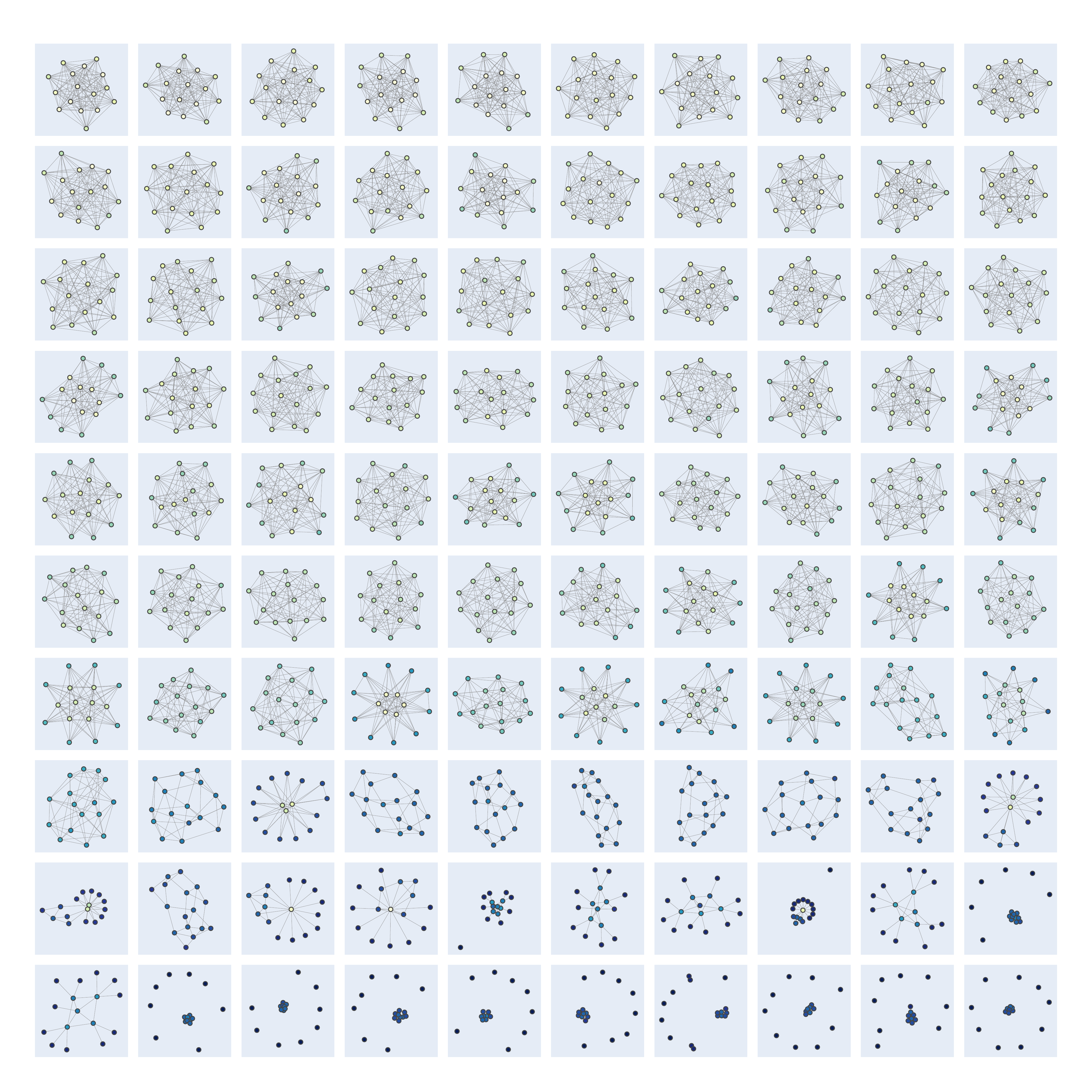} 
\caption{Graphs from IGGM with GCD}
\label{fig::gcd_iggm_all_graphs}
\end{figure*}

\subsection{Genetic algorithm}\label{sec:app_genetic}
This section describes our implementation of the genetic approach.

\paragraph{High-level description of the algorithm }
First, we obtain an initial population of $N$ graphs using either a specific random generator, or an ensemble of models, or user input with the size $N$. After that, we repeatedly do the following procedure. We choose two distinct parental graphs $P_1$ and $P_2$ from the population, and via crossover and mutation generate a child graph $C'$. After that, we go over all graphs in the current population, try to replace each of them with $C'$ and compute the difference in diversity caused by the replacement. Then, we choose the graph with the largest difference and if it is positive we replace it with $C'$; otherwise, we do nothing. Then we repeat the procedure. We also limit the number of unsuccessful attempts and denote this parameter as $K$. Now let us consider all stages of the algorithm in detail. 

\paragraph{Initial population}
For an initial set of graphs, we can either use established random graph models (e.g., generate $N$ graphs with $n$ nodes each using the Erd\H{o}s-R\'enyi model with edge probability $p = 0.5$ or with mixed $p$) or can pass an already saved set of graphs through command-line argument (for instance, the resulting population from the greedy algorithm can become the initial population for the genetic algorithm). The nodes of each graph are labeled by numbers $1, 2, \dots, n$ (we will need these labels for the crossover procedure). Denote the generated population by~$S$.

\paragraph{Selecting parents}
We randomly select two different graphs from $S$ as parents. To do so, we assign each graph the probability proportional to its fitness w.r.t. set of all other graphs and sample two different graphs from the resulting distribution.\footnote{Since Energy is negative, for this measure we sample with probabilities proportional to $-\frac{1}{\mathrm{fitness}}$.} We denote these graphs by $P_1$ and $P_2$.

\paragraph{Crossover}
Given the graphs $P_1$ and $P_2$, we construct a new child graph $C$. Informally, we want $C$ to copy some nodes and edges from $P_1$ and some nodes and edges from $P_2$. For each label from $1$ to $n$, we randomly and independently assign a number $1$ or $2$ with equal probability. If a label $i$ has been assigned $1$, then the node $i$ is copied from $P_1$, and otherwise it is copied from $P_2$. After that, for each pair of nodes $i < j$, we determine whether $C$ has an edge between $i$ and $j$ as follows: 
\begin{itemize}
    \item if $i$ was assigned $1$ and $j$ was assigned $1$, then $C$ has an edge between $i$ and $j$ iff $P_1$ has an edge between $i$ and $j$;
    \item if $i$ was assigned $2$ and $j$ was assigned $2$, then $C$ has an edge between $i$ and $j$ iff $P_2$ has an edge between $i$ and $j$;
    \item if $i$ and $j$ were assigned different numbers, then we randomly choose one parent, and $C$ has an edge between $i$ and $j$ iff the selected parent has an edge between $i$ and $j$.
\end{itemize}

\paragraph{Mutation}
We follow~\citet{Ipsen_2002} and perform mutations as follows. Given a graph $C$, with probability $\alpha$ we construct a mutated graph $C'$. First, we choose a node from $C$ uniformly at random and delete all edges connected to it. Then, we select a number $k\in \{1, 2, \dots, n - 1\}$ uniformly at random. We draw $k$ new edges from the node. These edges connect to random distinct nodes of the graph, with each node having an equal probability of being connected to. The resulting graph is denoted by $C'$. The probability of mutation $\alpha$ is another parameter of the algorithm. If mutation does not occur, then we take $C' = C$.

\paragraph{Update}
We check each graph from the population and try to replace it with $C'$. By $U$ we denote the graph giving the largest improvement after the replacement. If $f (U, S\setminus U) < f (C', S\setminus U)$, then we remove $U$ from $S$, add $C'$ to $S$, and call $C'$ a successful child. Otherwise, $C'$ is called unsuccessful, and the population $S$ remains unchanged. Note that this procedure does not decrease the diversity of the population and keeps the size of the population equal to $N$.

\paragraph{Number of update attempts} We limit the complexity of our algorithm by the total number of update attempts $L$, after which the algorithm finishes. Both successful and unsuccessful attempts are counted. To prevent the algorithm from getting stuck in a local optima, we count the number of consecutive failed updates. If there are no successful updates during the last $K$ attempts, we accept the candidate $C_k'$ among the last $K$ with the maximum value of $f(C_k',S\setminus U) - f(U,S\setminus U)$ and replace $U$ by $ C_k'$ in $S$. Since the added child is unsuccessful, it decreases the diversity of the population but helps to handle plateaus, which turned out to be more effective than restricting the decrease of diversity. 

Now, let us analyze the complexity of the genetic algorithm.

\vspace{5pt}
\begin{proposition}
The time complexity of the genetic algorithm is $O((a + bN)L)$, where $a$ is the complexity of computing a graph representation, $b$ is the complexity of calculating the distance between two representations, and $L$ is the number of update attempts. 
\end{proposition}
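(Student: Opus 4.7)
The plan is to split the total cost into a preprocessing stage and a main loop of $L$ update attempts, and to show that each attempt costs $O(a + bN)$. Summing over $L$ iterations gives the stated bound, provided the preprocessing (computing the $N$ initial representations and the initial pairwise-distance cache) is either assumed given or dominated by $L \cdot (a + bN)$. As in the greedy analysis, I would cache every graph representation as soon as it is produced and every pairwise distance as soon as it is computed, so that no descriptor or distance is ever recomputed.

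For a single iteration, I would account for the costs stage by stage. Parent selection draws two graphs from a fitness-weighted distribution over the current population; if the fitness values $\{f(G, S \setminus G) : G \in S\}$ are maintained in a table, this costs only $O(N)$. Crossover and mutation operate on graphs with $n$ nodes and can be done in $O(n^2)$ time (copying labels, drawing edges, resampling one node's neighborhood), which is absorbed by $a$ since any reasonable descriptor must read the entire adjacency matrix. Computing the representation of the child $C'$ then costs $O(a)$, and evaluating $D(C', G)$ for every $G \in S$ costs $O(bN)$. These $N$ new distances are cached.

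The delicate step is the update rule, which asks for the $U \in S$ maximizing $f(C', S \setminus U) - f(U, S \setminus U)$. A naive implementation recomputing $f(C', S \setminus U)$ from scratch for each $U$ would cost $\Theta(N^2)$ per iteration and break the target bound. The observation that saves us is that for every diversity measure considered in the paper (Energy, Average, Bottleneck, and their sum variants), the fitness $f(G, T)$ is a simple monotone function of an aggregate (sum, min, max) over $T$ of terms that depend only on $D(G, G_i)$. Hence, from the cached sum or min over all of $S$, the value $f(C', S \setminus U)$ for a specific $U$ can be obtained in $O(1)$ by removing one term, so scanning the $N$ candidates and choosing the best takes only $O(N)$. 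After the replacement, the $N - 1$ affected cached fitness values $f(G, S \setminus G)$ are likewise repaired in $O(1)$ each by swapping the contribution of $U$ for that of $C'$, giving another $O(N)$.

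Putting it together, each of the $L$ attempts costs $O(a) + O(bN) + O(N) = O(a + bN)$, yielding the claimed $O((a + bN)L)$. The main obstacle I expect is precisely the incremental-update argument above: verifying that the fitness structure of every measure of interest really admits $O(1)$ leave-one-out evaluation (for Bottleneck, this requires maintaining the two smallest distances, not only the minimum), and that updating the cached fitness table after a replacement can be done in $O(N)$ rather than $O(N^2)$. Once that bookkeeping is spelled out, the complexity bound is immediate.
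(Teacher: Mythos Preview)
Your proposal is correct and follows essentially the same decomposition as the paper: bound the cost of one update attempt by $O(a+bN)$ (parent selection $O(N)$, crossover/mutation $O(n^2)$ absorbed into $a$, descriptor of the child $O(a)$, $N$ new distances $O(bN)$, selecting $U$ in $O(N)$), then multiply by $L$, with the initial population cost dominated under the assumption $L\gg N$. The only difference is one of detail: the paper simply asserts that finding the best $U$ takes linear time, whereas you spell out the leave-one-out / incremental-fitness argument that actually justifies this step for the measures at hand (including the two-minimum bookkeeping for Bottleneck), which is a genuine service to the reader but not a different approach.
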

\vspace{-5pt}

\begin{proof}
    Each step of the genetic algorithm requires $O(N)$ operations for the choice of parents, $O(n^2)$ for the crossover, $O(n)$ for the mutation, where $n$ is the number of nodes in a graph, $O(a + bN)$ to calculate distances between the generated child and every graph in the population, $O(N)$ to find a graph $U$ which gives the largest improvement. So, in total, one step requires $O(n^2+ a + bN)$ operations. Since for all used distances $D$ we have $a \ge n^2$, we get that one step requires $O(a + bN)$ operations. If we run the genetic algorithm for $L$ steps (thus generating exactly $L$ children), the resulting time complexity is $O((a + bN )L)$.

    The time complexity of generating the initial population and calculating the fitness for all its elements is small in comparison with $O((a + bN)L)$, so it does not influence the time complexity of the algorithm (given that $L \gg N$).
\end{proof}

\begin{figure*}
\centering
\includegraphics[width=\linewidth]{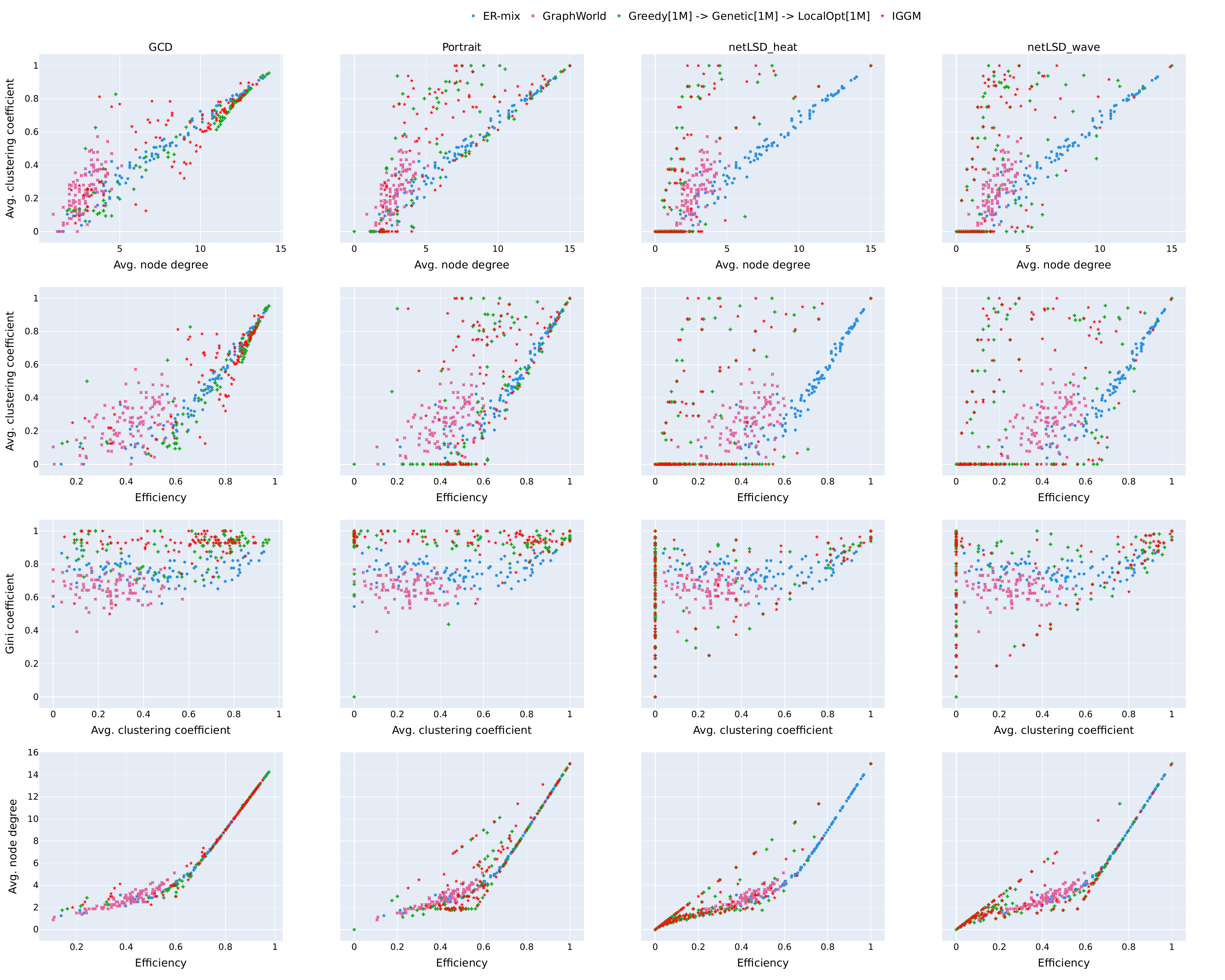} 
\caption{Joint distribution of graph characteristics, extended figure}
\label{fig::2D_algorithms_appendix}
\end{figure*}

\subsection{Local optimization algorithm}\label{app:local-opt}

\paragraph{Initial population}
The local optimization algorithm supports the same techniques as the genetic algorithm for defining the initial population. Namely, we can pass existing graphs through command line argument or can generate graphs on the fly from random graph models, e.g., ER-0.5 or ER-mix. At the beginning, we compute the fitness $f(G, S\setminus G)$ for every graph $G$ in the population~$S$. 

Then, we do the following:
\begin{enumerate}
\item At each iteration, choose a graph with probability inversely proportional to its fitness\footnote{For Energy, we sample with probabilities proportional to $-\mathrm{fitness}$.} and denote this graph by $U$.
\item  Pick two distinct nodes $u$ and $v$ from $U$ uniformly at random. If an edge $(u, v)$ exists in $U$, remove it. Otherwise, add $(u, v)$ to the edge list of $U$. Using this atomic operation, we obtain a changed graph $U'$. 
\item Compute the fitness $f(U',S\setminus U)$.
\item If $f(U', S\setminus U) > f(U, S\setminus U)$, replace $U$ by $U'$. Otherwise, do nothing and repeat the process.
\item Similarly to the genetic algorithm, we track the previous $K$ attempts and if they fail, we accept the replacement among the last $K$ attempts with the lowest difference between 
$f(U_{k}' , S\setminus U_k)$ and $f(U_k, S\setminus U_k)$.
\item Repeat the process $L$ times, where $L$ is the total number of attempts.
\end{enumerate}

The following proposition follows from the algorithm description.

\begin{proposition}
   The complexity of the local optimization algorithm is $O((a + bN )L)$, where $a$ is the complexity of computing a graph representation, $b$ is the complexity of calculating the distance between two representations, and $L$ is the number of update attempts.
\end{proposition}

\subsection{Iterative Graph Generative Modeling (IGGM)}\label{app:iggm}

In this section, we describe our implementation of the approach based on iterative neural generative modeling for the task of obtaining $N$ structurally diverse graphs. 

We denote the number of generated graphs during each generation step by $K$ and the total number of generated graphs by $L$. It is not strictly required but assumed that $L$ is divisible by $K$. Another parameter of the algorithm is $R<K$: $R$ graphs are used to train each graph generative model. In our experiments, we set $R = 10^3$, $K=10^5$, $L=10^6$.

To obtain the initial set of graphs to train a generative model on, we do the following: we generate a set of $10R$ graphs from ER-mix and then apply the Greedy algorithm on that set, obtaining $R$ initial graphs, which we denote $S_0$ and start the process.

At each step $t$, we train a generative neural network $g_{\theta_t}$ on the graphs obtained from the previous step $S_{t-1}$; the notation $\theta_t$ represents trainable parameters of the model at step $t$. Then, we use $g_{\theta_t}$ as a fixed random graph generator to create $K \gg R$ graphs and apply the Greedy algorithm to choose $R$ diverse graphs that will form the next training input. 

In our experiments, for $g_{\theta_t}$ we use Discrete Denoising Diffusion model DiGress~\citep{vignac2023digress}, which is distributed under the MIT License,  with its default parameters (including model hyperparameters and training routine) to generate graphs. 

As a result, our procedure consists of $\frac{L}{K}$ steps, where at each step $1 \le t \le \frac{L}{K}$ we do:

\begin{enumerate}
	\item Train generative model $g_{\theta_t}$ on the set of graphs $S_{t-1}$, initial weights are taken from the last weights of the previous iteration;
	\item Generate $K$ graphs using $g_{\theta_t}$;
	\item Apply the greedy algorithm to the generated set of graphs to obtain $R$ graphs, denote this set by $S_{t}$.
\end{enumerate}

Note that after each iteration, we can greedily choose $N$ graphs from $S_t$ of size $R$, and thus obtain the set of structurally diverse graphs with the desired size.

The time complexity of the algorithm depends on the number of epochs chosen for each network training step and thus may vary a lot. We used the default parameters of DiGress as a proof of concept.

\section{Experimental setup}

\paragraph{GraphWorld parameters}
For GraphWorld, we vary the model parameters to obtain more diverse graph structures. Namely, we choose the following parameters uniformly: P2Q-ratio from $[1, 10]$, num\_communities from $\{1, 2, 3, 4, 5\}$, avg\_node\_degree form $[4, n-1]$, power\_exponent from $[0.5, 1)$.

\paragraph{Hardware setup}\label{app:resources}

The experiments have been conducted on the machine with Intel Core i7-7800X @3.50GHz CPU, 2$\times$NVIDIA GeForce RTX 2080 Ti GPUs and 126G RAM. It took us approximately 500 CPU hours to conduct all the experiments.

\section{Additional experiments}

\begin{table*}[t]
\caption{Energy optimization results for graphs with 16 nodes
}
\label{tabl::algos_results}
\begin{center}
\scalebox{0.85}{
\begin{footnotesize}
\begin{tabular}{lllll}
\toprule
\multicolumn{1}{c}{Setup}                           & \multicolumn{1}{c}{GCD} & \multicolumn{1}{c}{Portrait-div} & \multicolumn{1}{c}{NetLSD-heat} & \multicolumn{1}{c}{NetLSD-wave} \\
\midrule
ER-mix                                              & 0.281 $\pm$ 0.0350  & 43.057 $\pm$ 8.812 & 72.387 $\pm$ 26.192  & 0.583 $\pm$ 0.0892   \\
GraphWorld                                      & 0.466 $\pm$ 0.0151 & 3.917 $\pm$ 0.0896 & 5.108 $\pm$ 0.2856 & 0.621 $\pm$ 0.0117 \\
Random Graph Generators                                      & 0.553 $\pm$ 0.0167 & 6.009 $\pm$ 0.2368 & 116.685 $\pm$ 8.009 & 1.334 $\pm$ 0.0831 \\
\midrule
Greedy[1M]                                           & 0.160 $\pm$ 0.0004       & 1.287 $\pm$ 0.0029                & 0.682 $\pm$ 0.0005              & 0.124 $\pm$ 0.0003                \\

Greedy[2M]                                           & 0.157 $\pm$ 0.0010       & 1.278 $\pm$ 0.0033                & 0.681 $\pm$ 0.0008              & 0.124 $\pm$ 0.0004                  \\

Greedy[3M]                                           &  0.156 $\pm$ 0.0004        &  1.274 $\pm$ 0.0018                 &  0.681 $\pm$ 0.0001              &  0.123 $\pm$  0.0003                \\

ER-mix$\to$Genetic[3M]                                 & 0.139 $\pm$ 0.0025      & 1.264 $\pm$ 0.0031           & 0.677 $\pm$ 0.0013               & \textbf{0.117 $\pm$ 0.0003}      \\
Greedy[1M]$\to$Genetic[2M]                              & 0.139 $\pm$ 0.0018      & 1.263 $\pm$ 0.0020           & 0.674 $\pm$ 0.0003    & 0.118 $\pm$ 0.0005               \\
Greedy[2M]$\to$Genetic[1M]                              & 0.141 $\pm$ 0.0012      & 1.263 $\pm$ 0.0027           & 0.674 $\pm$ 0.0004      & 0.118 $\pm$ 0.0005               \\
ER-mix$\to$Genetic[1M]$\to$LocalOpt[2M]               & 0.138 $\pm$ 0.0002      & 1.259 $\pm$ 0.0007              & 0.675 $\pm$ 0.0000               & \textbf{0.117 $\pm$ 0.0001}      \\
Greedy[1M]$\to$LocalOpt[2M]                & 0.139 $\pm$ 0.0012      & 1.255 $\pm$ 0.0006           & 0.679 $\pm$ 0.0001               & 0.118 $\pm$ 0.0001               \\
Greedy[1M] $\to$ Genetic[1M] $\to$  LocalOpt[1M] & 0.135 $\pm$ 0.0000      & 1.245 $\pm$ 0.0004              & \textbf{0.673 $\pm$ 0.0001}      & \textbf{0.117 $\pm$ 0.0001}      \\     
\bottomrule
\end{tabular}
\end{footnotesize}
}
\end{center}
\end{table*}

\begin{table*}[t]
\caption{Energy optimization results for graphs with 64 nodes}
\label{tabl::algos_results_bug_graphs}
 \begin{center}
\begin{small}
\begin{tabular}{lcccc}
\toprule
\multicolumn{1}{c}{Setup}                           & \multicolumn{1}{c}{GCD} & \multicolumn{1}{c}{Portrait-div} & \multicolumn{1}{c}{NetLSD-heat} & \multicolumn{1}{c}{NetLSD-wave} \\
\midrule
ER-mix                                       & 0.400           & 2.236          & 452.885          & 0.454          \\
GraphWorld                                       & 0.442           & 3.746          & 3.509          & 0.510          \\
Random Graph Generators                                       & 0.540           & 5.868          & 112.685          & 1.298          \\

\midrule
Greedy[1M]                                       & 0.177           & 1.155          & 0.812          & 0.172          \\
Greedy[3M]                                       & 0.175           & 1.148          & 0.796          & 0.169          \\
ER-mix$\to$Genetic[3M]                              & 0.167          & 1.128          & 0.567         & 0.128          \\
Greedy[1M]$\to$Genetic[2M]                     & 0.158          & 1.126          & 0.673          & 0.117          \\
ER-mix$\to$Genetic[1M]$\to$LocalOpt[2M]           & 0.133           & 1.086          & \textbf{0.551} & 0.118          \\
Greedy[1M]$\to$LocalOpt[2M]                     & 0.132          & 1.082          & 0.603          & 0.117          \\
Greedy[1M] $\to$  Genetic[1M] $\to$  LocalOpt[1M] & \textbf{0.128} & \textbf{1.060} & 0.673          & \textbf{0.116} \\
\bottomrule
\end{tabular}
\end{small}
\end{center}
\end{table*}

\paragraph{Examples of generated graphs}

Examples of generated graphs are shown in Figures~\ref{fig::portrait_genetic_all_graphs}-\ref{fig::gcd_iggm_all_graphs}. We see that when combined with Portrait-div, both Genetic and IGGM generate visually diverse and interesting structures. One can also notice that NetLSD tends to generate many extremely sparse graphs, while GCD generates more dense graphs.

\paragraph{Visualizing graph characteristics}
Figure~\ref{fig::2D_algorithms_appendix} visualizes various characteristics of generated graphs for the ER-mix and GraphWorld baselines, IGGM, and the combination of Greedy, Genetic, and LocalOpt. It extends Figure~\ref{fig::2D_algorithms_main} from the main text.

\paragraph{Extended numerical results} Table \ref{tabl::algos_results} extends the results from Table~\ref{tabl::algos_result_main} and also reports the standard deviation based on five independent trials.

\paragraph{Larger graphs}

We conducted experiments on larger graphs with $n=64$ nodes, the results are shown in Table~\ref{tabl::algos_results_bug_graphs} and they are consistent with the results on smaller graphs.

\end{document}